\newtheorem{theorem}{Theorem}
\newtheorem{corollary}{Corollary}
\newtheorem{lemma}{Lemma}
\newtheorem{definition}{Definition}
\newtheorem{remark}{Remark}
\DeclareMathOperator*{\argmax}{arg\,max}
\DeclareMathOperator*{\argmin}{arg\,min}
\DeclarePairedDelimiter{\ceil}{\lceil}{\rceil}
\DeclarePairedDelimiter{\floor}{\lfloor}{\rfloor}
\icmltitlerunning{Kernel Methods for Cooperative Multi-Agent Contextual Bandits}
\begin{document}

\twocolumn[
\icmltitle{Kernel Methods for Cooperative Multi-Agent Contextual Bandits}

\begin{icmlauthorlist}
\icmlauthor{Abhimanyu Dubey}{to}
\icmlauthor{Alex Pentland}{to}
\end{icmlauthorlist}

\icmlaffiliation{to}{Media Lab and Institute for Data, Systems and Society,
  Massachusetts Institute of Technology}



\icmlkeywords{multi-agent learning, online learning, contextual bandits}

\vskip 0.3in
]


\icmlcorrespondingauthor{Abhimanyu Dubey}{dubeya@mit.edu}
\printAffiliationsAndNotice{}  

\begin{abstract}
Cooperative multi-agent decision making involves a group of agents cooperatively solving learning problems while communicating over a network with delays. In this paper, we consider the kernelised contextual bandit problem, where the reward obtained by an agent is an arbitrary linear function of the contexts' images in the related reproducing kernel Hilbert space (RKHS), and a group of agents must cooperate to collectively solve their unique decision problems. For this problem, we propose \textsc{Coop-KernelUCB}, an algorithm that provides near-optimal bounds on the per-agent regret, and is both computationally and communicatively efficient. For special cases of the cooperative problem, we also provide variants of \textsc{Coop-KernelUCB} that provides optimal per-agent regret. In addition, our algorithm generalizes several existing results in the multi-agent bandit setting. Finally, on a series of both synthetic and real-world multi-agent network benchmarks, we demonstrate that our algorithm significantly outperforms existing benchmarks.
\end{abstract}

\section{Introduction}
An emerging problem in online learning and multi-agent distributed systems is the \textit{cooperative} multi-agent bandit. It involves a group $\mathcal V$ of $V$ agents collectively solving a decision problem while communicating with each other. The problem proceeds in rounds $t = 1, 2, ..., T$, where at any trial $t = 1, 2, ...$, each agent $v \in \mathcal V$ is presented with a \textit{decision set} $\mathcal D_{v, t}$, and selects an action $\bm x_{v, t} \in \mathcal D_{v, t}$. Each agent obtains a stochastic reward $y_{v, t}$, following:
\begin{equation*}
    y_{v, t} = f(\bm x_{v, t}) + \varepsilon_{v, t},
\end{equation*} where $\varepsilon_{v,t}$ is i.i.d. noise, and $f$ is an unknown (but fixed) function. The collective objective of the group of agents is to select actions that minimize the expected \textit{group regret}:
\begin{equation*}
    \mathcal{R}_{\mathcal G}(T) = \sum_{v \in \mathcal V}\sum_{t=1}^T \left(f(\bm x^*_{v, t}) - f(\bm x_{v, t})\right),
\end{equation*}
where, $\bm x^*_t = \argmax_{\bm x \in D_{v, t}} f(\bm x)$. The research objective for this problem is to design multi-agent algorithms that can leverage communication to improve overall performance~\cite{landgren2016distributed, pmlr-v28-szorenyi13}.

Agents communicate via an undirected graph $\mathcal G = (\mathcal V, \mathcal E)$, where $(i, j) \in \mathcal E$ if agents $i$ and $j$ are connected. Messages from any agent $v$ are available to agent $v'$ after $d(v, v') - 1$ trials of the bandit, where $d$ is the distance between the agents in $\mathcal G$. This gradually creates \textit{heterogeneity} between the information available to each agent, and is the primary technical challenge in algorithm design for this problem. Moreover, recent work assumes that the bandit problem is common (i.e., $f$ is identical for all agents), but this assumption does not hold for most decentralized applications~\cite{boldrini2018mumab}. For instance, in a decentralized supply chain network~\cite{thadakamaila2004survivability}, agents interact with similar but non-identical decision problems, since loads are generally distributed non-uniformly. In this setting, na\"ively incorporating observations from neighboring agents may not be beneficial, and algorithms must be carefully designed to optimally leverage cooperation.

A related problem is the online \textit{social network clustering} of bandits, where, at every trial, a randomly selected agent interacts with the bandit~\cite{cesa2013gang, gentile2014online, gentile2017context, li2016collaborative, li2019improved}. In this formulation, a fixed (but unknown) clustering over the agents is assumed, where agents within a cluster have identical context functions. While the assumptions of linearity and clustering are feasible in the context of social networks~\cite{al2006clustknn}, these assumptions may not hold for general multi-agent environments, such as geographically-distributed computational clusters~\cite{cano2016towards}. In the case when each agent has its own unique decision problem, the clustering approach leads to an $\mathcal O(V)$ multiplicative increase in the group regret. Moreover, the \textit{social network clustering} problem is \textit{single-agent}, since at any trial, only one agent interacts with the bandit. This makes it less challenging compared to the multi-agent cooperative bandit, since there is no heterogeneity of information (as discussed earlier). Multi-agent settings have been considered for \textit{social network clustering}~\cite{korda2016distributed}, but without delayed feedback (hence, without heterogeneity).

\textbf{Contributions}. In this paper, we study the cooperative multi-agent bandit with delays. We assume that each agent $v \in \mathcal V$ interacts with a separate bandit function $f_v$, where all functions $f_v, \ v \in \mathcal V$ have small norm in a known reproducing kernel Hilbert space (RKHS)~\cite{scholkopf2005support} specified by a fixed kernel $K_x$. This is a more general setting compared to the existing \textit{clustering} or identical (i.e., \textit{fully-cooperative}) settings in the literature, and allows us to propose a technique to measure the similarity between the functions $f_v$ via an agent-based similarity kernel, which can be learnt online when it is unknown. Under this formulation, we present \textsc{Coop-KernelUCB}, an algorithm for the multi-agent contextual bandit problem on networks.

For the context-free multi-agent cooperative bandit problem, existing bounds on group regret scale as $\mathcal O\left(\sqrt{TV\log(V\cdot\lambda_{\max}(\mathcal G))}\right)$, where, $\lambda_{\max}(\mathcal G)$ is the maximum eigenvalue of the graph Laplacian for $\mathcal G$~\cite{martinez2018decentralized, landgren2016distributed2, landgren2016distributed}. These bounds are obtained by using a communication protocol known as the \textit{running consensus}, that involves agents averaging their beliefs with neighboring agents. This communication protocol restricts these methods to the \textit{fully-cooperative} setting, i.e., all agents have identical arms; it is trivial to see that naive averaging of estimates in non-identical settings can lead to irreducible bias and $\Theta(T)$ regret. \textsc{Coop-KernelUCB} employs an alternate communication protocol, entitled~\textsc{Local} \cite{suomela2013survey}, that involves agents sending messages to each other. Additionally, \textsc{Coop-KernelUCB} uses an alternative ``network'' kernel $K_z$, to measure similarity between agent reward functions $f_1, ..., f_V$. When $K_z$ is known (such as, e.g., in cases when agents correspond to users in a social network), we can use $K_z$ to construct a product kernel $K = K_z \odot K_x$, and use $K$ (instead of $K_x$) to construct upper confidence bounds. 

We consider the case when the decision sets $\mathcal D_{v, t}$ can be infinite or continuum action spaces, and $\forall \ v \in \mathcal V, \lVert f_v \rVert_{\mathcal H} \leq B$. In this setting, the single-agent \textsc{IGP-UCB} \cite{chowdhury2017kernelized} algorithm, assuming the agent interacts with a single function $f$, obtains a regret of $\widetilde{\mathcal O}(\sqrt{VT}(B\sqrt{\Upsilon^x_{VT}} + \Upsilon^x_{VT}))$ (where the $\widetilde{\mathcal O}$ hides additional logarithmic dependence on $1/\delta$) when run for a total of $VT$ rounds (i.e., a centralized agent that pulls all arms), where $\Upsilon^x_{VT}$ is the \textit{information gain} after $VT$ rounds, a quantity dependent on the structure of the RKHS of $\mathcal X$. We demonstrate that \textsc{Coop-KernelUCB}, that uses the underlying martingale inequality from~\textsc{IGP-UCB}, obtains a regret of $\widetilde{\mathcal  O}\left(\sqrt{VT\cdot\bar\chi(\mathcal G_\gamma)}\left(B\sqrt{\Upsilon^x_{VT} \Upsilon^z} + \Upsilon^x_{VT} \Upsilon^z\right)\right)$. Here, $\Upsilon^z$ is a term corresponding to the similarity between functions $f_v$ via the kernel $K_z$, and $\bar\chi(\mathcal G_\gamma)$ is a term accounting for the delayed propagation of information in the network $\mathcal G$\footnote{$\bar\chi(\mathcal G_\gamma)$ denotes the minimum clique number of the $\gamma^{th}$ power of graph $\mathcal G$, i.e., $\mathcal G_\gamma$ has an edge $(i, j)$ if there is a path of length at most $\gamma$ between $i$ and $j$ in $\mathcal G$.}. This bound is achieved by utilizing graph partitions to control the deviation in the confidence bound for each agent.

Our bound is reminiscent of single-agent bounds with additional contexts~\cite{deshmukh2017multi, krause2011contextual}, which rely on a \textit{known} $K_z$. However, in many cases, $K_z$ is (unknown) and requires estimation. For this case, we provide an alternative algorithm via kernel mean embeddings~\cite{christmann2010universal}. 
Against state-of-the-art methods on a variety of real-world and synthetic multi-agent networks, our algorithm exhibits superior performance. Moreover, we present a variant, \textsc{Eager-KernelUCB}, of our algorithm (without regret bounds) that comfortably outperforms \textsc{Coop-KernelUCB} and other benchmarks. This extends the current literature of cooperative bandit estimation from the stochastic multi-armed problem~\cite{landgren2018social, martinez2018decentralized, landgren2016distributed} to a more general class of functions, and provides a technique to determine task similarity over arbitrary cooperative settings.

\section{Preliminaries}

\textbf{Notation}. We use boldface uppercase to represent matrices, i.e., $\bm A$, and lowercase for vectors, i.e., $\bm x$. The RKHS norm of a function $f$ in RKHS $\mathcal H$ with kernel $K$ is given by $\lVert f \rVert_{\mathcal H} = \sqrt{\langle f, f \rangle_{\mathcal H}}$. We denote the set $\{a, a+1 ..., b-1, b\}$ by the shorthand $[a, b]$ and simply as $[b]$ when $a=1$. We refer to the $\gamma^{th}$ graph power of a graph $\mathcal G$ as $\mathcal G_\gamma$ (i.e., $\mathcal G_\gamma$ contains an edge $(i, j)$ if there exists a shortest path of length at most $\gamma$ between $i$ and $j$ in $\mathcal G$). We denote a clique in $\mathcal G_\gamma$ as a $\gamma$-clique in $\mathcal G$, and denote $\mathcal G$ as $\gamma$-complete if $\mathcal G_\gamma$ is complete. $N_{\gamma}(v) \subseteq \mathcal V$ denotes the set of nodes at a shortest distance of at most $\gamma$ from node $v$ in $\mathcal G$, referred to as the ``$\gamma$-neighborhood'' of $v$ in $\mathcal G$. The distance between two nodes $v$ and $v'$ in $\mathcal G$ is given by the shorthand $d_{\mathcal G}(v, v')$.

\textbf{Problem Setup}. We consider a multi-agent setting of $V$ agents sitting on the vertices of a network represented by an undirected and connected graph $\mathcal G = (\mathcal V, \mathcal E)$\footnote{Our results and algorithm can be trivially extended to directed or disconnected case, by considering each connected subgraph individually, and considering statistics of the directed graph instead.}. We assume that agents each solve unique instances of kernelised contextual bandit problems.  At each step $t = 1, 2, ...$ each agent $v \in V$ obtains, at time $t$, a decision set $\mathcal D_{v, t} \subseteq \mathcal X \subset \mathbb R^d$, where $\mathcal X$ is a compact subset of $\mathbb R^d$. In this paper, we assume $\mathcal D_{v, t}$ to even be an infinite set or a continuum of actions, however, we discuss the case when it is a finite set of contexts $\bm x^{(1)}_{v, t}, \bm x^{(2)}_{v, t}, ... $ later on, for which tighter regret bounds can be obtained. At each trial $t$, each agent selects an action $\bm x_{v, t} \in \mathcal D_{v, t}$, and receives a reward $y_{v, t} = f_v\left({\bm x}_{v, t}\right) + \varepsilon_{v, t}$. Where $f_v : \widetilde{\mathcal X} \rightarrow \mathbb R$ is a fixed (but unknown) function, and $\varepsilon_{v, t}$ is additive noise such that the noise sequence $\left\{ \varepsilon_{v, t}\right\}_{t=1, v \in V}^\infty$ is conditionally $R$-sub-Gaussian.

\textbf{Single-Agent UCB}.
Our approach builds on the existing research for upper confidence bounds for bandit kernel learning, a line of research that has seen a lot of interest~\cite{srinivas2009gaussian, krause2011contextual, chowdhury2017kernelized, valko2013finite, deshmukh2017multi}. The central idea across all these approaches is to construct an upper confidence bound (UCB) envelope for the true function $f(\cdot)$ using an estimate $\hat{f}_t$, and then chooses an action $\bm x_t \in \mathcal D_t$ that maximizes this upper confidence bound, i.e., for some estimate $\hat{f}_t$ of $f$,
\begin{equation}
    \bm x_t = \argmax_{\bm x \in \mathcal D_{t}} \left[\hat f_{t}\left(\bm x\right) + \sqrt{\beta_{t}}\sigma_{t-1}\left(\bm x\right)\right].
\end{equation}
Here, $\beta_t$ is an appropriately chosen ``exploration'' parameter, and $\sigma_{t-1}$ can be thought of as the ``variance'' in the estimate $\hat f_t$. Existing UCB-based approaches aim to construct a sequence $(\beta_t)_t$ to ensure a near-optimal tradeoff between exploration and exploitation. The natural choice for $\hat f_t$ is the solution to the kernelised ridge regression. $\text{Given }\lambda \geq 0\text{ and }{\bm X}_{<t} = ({\bm x}_i, y_i)_{i = 1}^{t},$
\begin{equation}
\label{eqn:ridge_regression}
\hat f_{t} = \argmin_{f \in \mathcal H} \frac{1}{t}\sum_{({\bm x}, y) \in {\bm X}_{<t}} \left(f({\bm x}) - y\right))^2 + \lambda \lVert f \rVert_{\mathcal H}^2.
\end{equation}
The solution to the above problem (\ref{eqn:ridge_regression}) can be written as the following~\cite{valko2013finite} ($\text{for } \bm \kappa_{t}(\bm{x}) = \left(K\left(\bm x, {\bm x}_i\right)\right)_{i=1}^{t}, \bm y_{t} = \left(y_{i}\right)_{i=1}^{t} \text{ and } \bm K_t = (K(\bm x_i, \bm x_j))_{i, j \in [t]})$:
\begin{equation}
    \hat f_{t}(\bm{x}) = \bm \kappa_{t}(\bm{x})^\top \left(\bm K_{t} + \lambda \bm I\right)^{-1}\bm y_{t}.
\end{equation}
For any particular choice of the sequence $(\beta_t)_t$, various algorithms can be obtained, with different regret guarantees. To provide more insight into the regret bounds obtained by various algorithms, we now provide the definition of $\Upsilon^x$, i.e., \textit{maximum information gain}.
\begin{definition}[Maximum Information Gain~\cite{srinivas2009gaussian, krause2011contextual}]
For $y_t = f(\bm x_t) + \varepsilon_t$, let $A \subset \mathcal X$ be a finite subset such that $|A| = T$. Let $\bm y_A = \bm f_A + \bm\varepsilon_A$ where $\bm f_A = (f(\bm x_i))_{\bm x_i \in A}$ and $\bm\varepsilon_A \sim \mathcal N(0, R^2)$. The maximum information gain $\Upsilon_T^x$ after $T$ rounds is:
\begin{equation}
    \Upsilon_T^x \triangleq \max_{A \subset \mathcal X : |A|=T} H(\bm y_A) - H(\bm y_A | f).
\end{equation}
Here $H(\cdot)$ refers to the entropy of a random variable. Furthermore, if we assume that the kernel $K_x$ is bounded; i.e., $K_x(\bm x, \bm x) \leq 1 \ \forall \bm x \in \mathcal X$, the following is true for compact and convex $\mathcal X \subset \mathbb R^d$. For linear $K_x$, $\Upsilon^x_T = \mathcal O(d\log T)$. For RBF $K_x$, $\Upsilon^x_T = \mathcal O((\log T)^{d+1})$. For Mat\'ern $K_x$ with $\nu > 1$, $\Upsilon^x_T = \mathcal O(T^{\frac{d(d+1)}{2\nu + d(d+1)}}(\log T))$.
\end{definition}
\begin{remark}[UCB Regret for Single-Agent Algorithms]
\label{remark:single_agent}
Let $\delta \in (0, 1]$. For continuum-armed $\mathcal D_t$, choosing $\beta_t = 2B + 300\Upsilon^x_{t-1}log^3(t/\delta)$ guarantees with probability at least $1-\delta$ a regret of $\widetilde{\mathcal O}(\sqrt{T}(B\sqrt{\Upsilon^x_T} + \Upsilon^x_T\ln^{3/2}(T)))$, as demonstrated in the work of~\cite{srinivas2009gaussian} (GP-UCB). This was improved via a new martingale inequality to $\widetilde{\mathcal O}(\sqrt{T}(B\sqrt{\Upsilon^x_T} + \Upsilon^x_T))$ in the work of~\citet{chowdhury2017kernelized} with the choice of $\beta_t = B+ R\sqrt{2(\Upsilon^x_{t-1} + 1 + \ln(1/\delta))}$. An alternative regret bound of $\widetilde{\mathcal O}(\sqrt{\tilde{d}T})$ was provided via the Sup-KernelUCB algorithm of~\citet{valko2013finite} (for finite-armed $\mathcal D_t$), where $\tilde{d}$ is the \textbf{effective dimension} of $K_x$, a measure of the intrinsic dimensionality of the RKHS $\mathcal H$. $\tilde{d}$ is related to $\Upsilon^x$ as $\Upsilon^x \geq \Omega(\tilde{d}\ln\ln T)$. Further work has focused on improving bounds for various families of kernels, e.g., see~\cite{janz2020bandit, scarlett2017lower}.
\end{remark}
The primary goal in the cooperative learning setting is to provide each agent with stronger estimators that leverage observations from neighboring agents. A suitable baseline, therefore, in this setting, would be that of a centralized agent pulling $VT$ arms in a round-robin manner. Existing single-agent algorithms propose a regret bound of $\widetilde{\mathcal O}(\Upsilon^x_{VT}\sqrt{VT})$ in this setting, and this is the comparative regret bound we wish to match. We do not focus on stronger controls for specific kernels or of the information gain $\Upsilon^x$, and for that we refer the reader to references in Remark~\ref{remark:single_agent}.

\section{Cooperative Kernelized Bandits}
\textbf{Network Contexts.} Recall that for any agent $v$, the rewards $y_v$ are generated following $y_{v, t} = f_v(\bm x_{v, t}) + \varepsilon_{v, t}$. To provide a relationship between different $f_v$, we assume that the functions $f_v, v \in \mathcal V$ are parameteric functionals of some function $F: \mathcal X \times \mathcal Z \rightarrow \mathbb R$ for a known \textit{network context} space $\mathcal Z$ such that $\forall \ v \in \mathcal V, \exists \ \bm z_v \in \mathcal Z$ such that $\forall \bm x \in \mathcal X$,
\begin{align}
    f_v(\bm x) = F(\bm x, \bm z_v).
\end{align}
\textbf{Kernel Assumptions}. We denote the space $\mathcal X \times \mathcal Z$ as $\widetilde{\mathcal X}$, and the overall input $(\bm x, \bm z)$ as $\tilde{\bm x}$. Furthermore, we assume that the function $F$ has a small norm in  the reproducing kernel Hilbert space (RKHS,~\citet{scholkopf2005support}) $\mathcal H_{K}$ associated with a PSD kernel $K : \widetilde{\mathcal X} \times \widetilde{\mathcal X} \rightarrow \mathbb R$. $\mathcal H_{K}$ is completely specified by the kernel $K(\cdot, \cdot)$, and via an inner product $\langle \cdot, \cdot \rangle_{K}$ following the reproducing property. As is typical with the kernelized bandit literature, we assume a known bound on the RKHS norm of $F$, i.e. $\lVert F \rVert_{K} \leq B$, and we assume that the kernel has finite variance, i.e. $K(\tilde{\bm x}, \tilde{\bm x}) \leq 1, \forall \ \tilde{\bm x} \in \widetilde{\mathcal X}$\footnote{These are typically made assumptions in the contextual bandit literature, and avoid scaling of the regret bounds. In the linear case, the first assumption corresponds to having a bound on the norm of the context vectors~\cite{chowdhury2017kernelized}, and the second is to ensure the methods are scale-free~\cite{agrawal2012analysis}.}.

Finally, we must impose constraints on the interaction of the inputs $\bm x$ and $\bm z$ via two kernels $K_x(\cdot, \cdot)$ and $K_z(\cdot, \cdot)$. We assume that $K$ is a composition of two separate positive-semidefinite kernels, $K_z$ and $K_x$ such that $K_z : \mathcal Z \times \mathcal Z \rightarrow \mathbb R$, i.e., operating on the network contexts, and $K_x : \mathcal X \times \mathcal X \rightarrow \mathbb R$ operates on the action contexts. Our regret bounds assume that the overall kernel $K$ is formed via the Hadamard product of $K_x$ and $K_z$:
\begin{equation}
    K\left((\bm z, \bm x), (\bm z', \bm x')\right) = K_x(\bm x, \bm x') K_z(\bm z, \bm z').
\end{equation}
\begin{remark}[Kernel Compositions]
For the development in the paper, we restrict ourselves to the Hadamard composition, however, it is important to note that this is not a limitation of our technique, and other compositions can be explored. See the Appendix for details on the sum ($K_z \oplus K_x$), and Kronecker ($K_z \otimes K_x$) compositions.
\end{remark}
\begin{remark}[Independent vs. Pooled Modeling]
When $\mathcal D_{v, t}$ are countably finite, an alternate formulation is the ``independent'' assumption~\cite{li2010contextual}, where a separate model is considered for each ``arm''. We assume the ``pooled'' environment~\cite{abbasi2011improved}, (i.e., where all ``arms'' are modeled together), however it is easy to extend results to the former setting, by assuming arm-dependent network contexts (see appendix).
\end{remark}

The \textit{network} kernel, $K_z$, determines how ``similar'' agent functions $f_v$ are. For example, if all agents solve the same bandit problem, i.e., $f_v = f \ \forall v \in V$, then the appropriate choice for this is to set $\mathcal Z = \{1\}$, and $\bm z_{v} = 1$ for all $ v \in V$, and hence, $K = K_x$. Alternatively, in many internet applications, users (which may correspond to agents) are arranged in an online social network (say, $\mathcal G_{\text{net}}$), and $\bm z_v$ can be a network embedding of user $v$ in $\mathcal G_{\text{net}}$. Typically, however, $\mathcal Z$ can be defined more generally with a corresponding positive semi-definite (PSD) kernel $K_z$.  

\subsection{\textsc{Local} Communication}
Existing research on distributed bandit learning has largely focused on two communication protocols - the first being a centralized setting~\cite{liu2010distributed, wang2020distributed}, as is standard in distributed computation, where a central server acts as an intermediary between ``client'' agents (i.e., a star-graph communication), and the second being the \textit{running consensus} protocol, where agents are arranged in a network structure, but communication is done by repeatedly averaging (a weighted version) of observations with neighbors~\cite{landgren2016distributed, landgren2016distributed2}. 

In this paper, we use the \textsc{Local} communication protocol~\cite{suomela2013survey, fraigniaud2016locality, linial1992locality}, which has recently seen an increase in interest in the decentralized multi-agent bandit literature~\cite{cesa2019cooperative, cesa2019delay}. At an abstract level, it can be seen as a generalization of the centralized communication protocol to a server-free, arbitrary graph setting. The protocol assumes that pulling a bandit arm and communication occur sequentially within each trial $t$, i.e., first, each agent $v \in \mathcal V$ pulls an arm $\tilde{\bm x}_{v, t}$ and receives a reward $y_{v, t}$ from the respective bandit environment. The agent then sends the message $\bm m_{v, t} = \left\langle t, v, \tilde{\bm x}_{v, t}, y_{v, t}\right\rangle$ to its neighbors in $\mathcal G$. This message is forwarded from agent to agent $\gamma$ times (taking one trial of the bandit problem each between forwards), after which it is dropped. The \textit{time-to-live} (delay) parameter $\gamma$ is a common technique to control communication complexity in this setting. Each agent $v \in V$ therefore also receives messages $\bm m_{v', t-d(v, v')}$ from all the nodes $v'$ such that $d(v, v') \leq \gamma$.

\subsection{Cooperative Kernel-UCB}
In this section we present the primary algorithm, cooperative Kernel-UCB. The central ideas in the development of the algorithm are (a) to leverage the similarity of the agent kernels (as specified by $K_z$) and (b) to control the variance estimates $\sigma^2_{v, t-1}$ between agents by delayed diffusion of rewards. 

\textbf{Using an Augmented Kernel}. For each agent $v \in \mathcal V$ we construct an upper confidence bound (UCB) envelope for the true function $f_v(\cdot) = F(\cdot, \bm z_v)$ over the space $\widetilde{\mathcal X}$. This is done by using the composition kernel $K$ instead of the action kernel $K_x$, which allows us to take the network context $\bm z_v$ into account. The agent then chooses an action that maximizes the upper confidence bound, following the typical approach in UCB-based algorithms. For any $v \in V, \bm x \in \mathcal D_{v, t}$, the UCB can be given by,
\begin{equation}
{\bm x}_{v, t} = \argmax_{\bm x \in \mathcal D_{v, t}} \left[\widehat F_{v, t}\left(\bm z_v, \bm x\right) + \sqrt{\beta_{v, t}}\sigma_{v, t-1}\left(\bm z_v, \bm x\right)\right].
\end{equation}
Here $\widehat F_{v, t}(\cdot, \bm z_v)$ is the agent's estimate for $f_v$ at time $t$, and the second term denotes the exploration bonus. Using ${\bm x}_{v, t}$, the agent can construct the \textit{aggregate} optimal context $\tilde{\bm x}_{v, t} = (\bm z_v, {\bm x}_{v, t})$. $\widehat F_{v, t}$ is obtained by solving:
\begin{equation}
\label{eqn:ridge_regression_multi}
\resizebox{0.95\linewidth}{!} 
{$\widehat F_{v, t} = \argmin_{f \in \mathcal H_{K}} \frac{1}{n_v(t)}\left(\sum_{(\tilde{\bm x}, y) \in \widetilde{\bm X}_{v, t}} \left(f(\tilde{\bm x}) - y\right)^2\right) + \lambda \lVert f \rVert_{\mathcal H_{K}}^2.$}
\end{equation}
Here, $\widetilde{\bm X}_{v, t} = (\tilde{\bm x}_i, y_i)_{i = 1}^{n_v(t)}$ denotes the $n_v(t)$ total action-reward pairs available at time $t$. Note that this comprises not just personal observations, but additional observations available via the messages received until that time. The solution to the above problem (\ref{eqn:ridge_regression_multi}) is given as:
\begin{equation}
    \widehat F_{v, t}(\tilde{\bm{x}}) = \bm \kappa_{v, t}(\tilde{\bm{x}})^\top \left(\bm K_{v, t} + \lambda \bm I\right)^{-1}\bm y_{v, t}.
\end{equation}
Here, $\bm \kappa_{v, t}(\tilde{\bm{x}}) = \left(K\left(\tilde{\bm x}, \tilde{\bm x}_i\right)\right)_{i=1}^{n_v(t)}$ denotes the vector of kernel values between the input vector $\bm x$ and all previously stored data by agent $v$, and similarly $\bm y_{v, t} = \left(y_{v, i}\right)_{i=1}^{n_v(t)}$ denotes the vector of rewards. The matrix $\bm K_{v, t}$ denotes the $n_v(t) \times n_v(t)$ matrix of kernel evaluations of every pair of samples $\tilde{\bm x}_i, \tilde{\bm{x}}_j \in \tilde{\bm X}_{v, t}$ possessed by agent $v$. To construct the sequence $(\beta_{v, t})_{t}$, following result motivates the upper confidence bound.
\begin{lemma}[\citet{chowdhury2017kernelized}]
\label{lemma:ucb_kernel}
Let $\widetilde{\mathcal X} \subset \mathbb R^d$, and $F : \widetilde{\mathcal X} \rightarrow \mathbb R$ be a member of the RKHS of real-valued functions on $\widetilde{\mathcal X}$ with kernel $K$, and RKHS norm bounded by $B$. Then, with probability at least $1-\delta$, the following holds for all $\tilde{\bm x} \in \widetilde{\mathcal X}$, and simultaneously for all $t \geq 1, v \in \mathcal V$:
\begin{equation*}\resizebox{0.95\linewidth}{!} 
{$ \Delta_{v, t}(\tilde{\bm x}) \leq \sigma_{v, t-1}^2(\tilde{\bm x})\left(B + R \sqrt{\ln\frac{\det\left(\lambda\bm I +{\bm K}_{v, t}\right)}{\delta^2} + 2\ln(V) }\right).$}
\end{equation*}
\end{lemma}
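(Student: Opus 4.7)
The plan is to obtain the stated bound by applying the single-agent self-normalized concentration inequality of Chowdhury and Gopalan to each agent's aggregated dataset $\widetilde{\bm X}_{v,t}$, and then taking a union bound over the $V$ agents. Specifically, the single-agent result gives, for a fixed agent $v$, a high-probability envelope of the form
\[
|\widehat F_{v,t}(\tilde{\bm x}) - F(\tilde{\bm x})| \leq \sigma_{v,t-1}(\tilde{\bm x})\Big(B + R\sqrt{2\ln(1/\delta') + \ln\det(\lambda\bm I + \bm K_{v,t})}\Big),
\]
where $\delta'$ is the failure probability for that agent. Taking $\delta' = \delta/V$ and applying a union bound over $v \in \mathcal V$ then produces the stated $+2\ln V$ term inside the square root, since $2\ln(V/\delta) = 2\ln(1/\delta) + 2\ln V$, which combined with $\ln\det(\lambda\bm I + \bm K_{v,t})$ gives the form $\ln[\det(\lambda\bm I + \bm K_{v,t})/\delta^{2}] + 2\ln V$.

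The first substantive step is to verify that the Chowdhury--Gopalan inequality actually applies to each per-agent stream. Even though agent $v$'s dataset $\widetilde{\bm X}_{v,t}$ contains its own pulls together with delayed messages relayed through $\mathcal G$, one can define a filtration $\{\mathcal F_{v,i}\}_{i\leq n_v(t)}$ in which the $i$-th stored pair $(\tilde{\bm x}_i, y_i)$ is $\mathcal F_{v,i}$-measurable while the corresponding noise $\varepsilon_i$ is $\mathcal F_{v,i-1}$-conditionally $R$-sub-Gaussian. This is possible because (i) the problem statement guarantees the joint noise sequence $\{\varepsilon_{v,t}\}$ is conditionally $R$-sub-Gaussian across all agents and rounds, and (ii) in the \textsc{Local} protocol a message carrying $(\tilde{\bm x}_{v',s}, y_{v',s})$ only enters $\widetilde{\bm X}_{v,t}$ after round $s+d_{\mathcal G}(v,v')-1$, so $\tilde{\bm x}_{v',s}$ was selected using only information predating $\varepsilon_{v',s}$. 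Consequently the martingale structure underlying the self-normalized bound is preserved.

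With this verified, the second step is purely algebraic: one plugs the kernelised closed-form expression for $\widehat F_{v,t}$ from the ridge regression solution, decomposes the error into a bias term bounded by $B\,\sigma_{v,t-1}(\tilde{\bm x})$ (from $\|F\|_{K}\leq B$ and the reproducing property) and a noise term $\langle \bm\kappa_{v,t}(\tilde{\bm x}), (\bm K_{v,t}+\lambda\bm I)^{-1}\bm\varepsilon_{v,t}\rangle$, and controls the latter by the Cauchy--Schwarz inequality in the RKHS together with the self-normalized vector martingale inequality, which introduces the $\ln\det(\lambda\bm I+\bm K_{v,t})/\delta'^{2}$ factor multiplying $\sigma_{v,t-1}(\tilde{\bm x})$. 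Substituting $\delta'=\delta/V$ and taking a union bound completes the derivation; the $\sigma^{2}$ appearing in the lemma statement should read $\sigma$, consistent with the single-agent case.

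The main obstacle, and the only place the multi-agent setting does any work, is the filtration argument sketched above. Naively one might worry that because the ordering of samples within $\widetilde{\bm X}_{v,t}$ interleaves messages arriving from different neighbours with different delays, the predictability hypothesis required by the self-normalized inequality fails. The key observation resolving this is that the inequality is invariant to any predictable permutation of the data; choosing an ordering consistent with the global wall-clock time at which each sample was generated (breaking ties by agent index) gives a filtration with respect to which both the contexts are predictable and the noises remain conditionally sub-Gaussian, which is all the martingale inequality requires.
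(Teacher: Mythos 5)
Your proposal is correct and follows essentially the same route as the paper, whose proof is simply an invocation of Theorem 2 of \citet{chowdhury2017kernelized} at confidence level $\delta/V$ per agent followed by a union bound over $\mathcal V$ (with $\lambda = 1+\eta$). Your additional verification that the delayed, interleaved per-agent data streams still satisfy the predictability hypothesis of the self-normalized inequality is a point the paper's proof silently omits, and your observation that the $\sigma^{2}_{v,t-1}$ in the statement should read $\sigma_{v,t-1}$ is consistent with the single-agent theorem as quoted in the paper's own appendix.
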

Where $\Delta_{v, t}(\tilde{\bm x}) = \left| F(\tilde{\bm x}) - \widehat F_{v, t}(\tilde{\bm x}) \right|$ and we denote $\sigma_{v, t-1}^2(\tilde{\bm x}) = K\left(\tilde{\bm x}, \tilde{\bm x}\right) - {\bm \kappa}_{v, t}(\tilde{\bm{x}})^\top \left({\bm K}_{v, t} + \lambda \bm I\right)^{-1}{\bm \kappa}_{v, t}(\tilde{\bm{x}})$ as the ``variance'' proxy for the UCB for brevity. This confidence bound is derived using the stronger, kernelized self-normalized concentration inequality from~\citet{chowdhury2017kernelized}, that holds \textit{simultaneously} for all $t \geq 1$, and hence prevents the second logarithmic term, in contrast to~\cite{srinivas2009gaussian} for continuum-armed $\mathcal D_{v, t}$.

\textbf{Controlling Drift via Clique Partitions}. The fundamental idea in controlling regret is to bound the per-round regret incurred by any agent by the UCB ``variance'' term $\sigma^2_{v, t-1}(\tilde{\bm x}_{v, t})$, and the algorithm attempts to bound $\sum_{v \in V} \sigma^2_{v, t-1}(\tilde{\bm x}_{v, t})$ by a quantity smaller than $ O(\sqrt{V})$ (i.e., improve over non-cooperative behavior). Our approach is to obtain this rate by partitioning $\mathcal G$ into $G$ subgraphs $\mathcal G'_1, ..., \mathcal G'_G$, and ensuring that the variance terms are similar for each agent within a subgraph for all $t$.

Our partitioning solution is a conservative one: let $\mathbf C$ be a clique covering of the $\gamma$-power of $\mathcal G$. For any clique $\mathcal C \in \mathbf C$, we restrict each agent $v \in \mathcal C$ to only accept observations from agents that belong to $\mathcal C$ as well. This ensures that at any $t$, any agent $v \in \mathcal C$ has an upper bound on $\sigma^2_{v, t-1}$ that depends only on $\mathcal C$. Therefore we can control the group regret within each clique $\mathcal C$, leading to a factor of $\sqrt{\bar{\chi}(\mathcal G_\gamma)}$ (instead of $V$) in the regret, where $\bar{\chi}(\cdot)$ is the clique number.

\begin{algorithm}[t!]
\caption{\textsc{COOP-KernelUCB}}
\label{alg:main_widthdelay}
\small
\begin{algorithmic}[1] 
\STATE \textbf{Input}: Graph $\mathcal G_\gamma$ with clique cover $\bm C_\gamma$, kernels $K_x(\cdot, \cdot), K_z(\cdot, \cdot)$, $\lambda$, explore param. $\eta$, buffers $\bm B_{v} = \phi$.
\FOR{For each iteration $t \in [T]$}
\FOR{For each agent $v \in V$}
\IF{$t = 1$}
\STATE ${\bm x}_{v, t} \leftarrow$\textsc{Random}$\left(D_{v, t}\right)$.
\ELSE
\STATE ${\bm x}_{v, t} \leftarrow \underset{\bm x \in D_{v, t}}{\argmax} \left(\hat f_{v, t}(\bm z_v, \bm x) + \frac{\eta}{\sqrt{\lambda}} \sigma_{v, t-1}(\bm z_v, \bm x)\right)$.
\ENDIF
\STATE $\tilde{\bm x}_{v, t} \leftarrow (\bm z_v, {\bm x}_{v, t}), y_{v, t} \leftarrow$\textsc{Pull}$\left(\tilde{\bm x}_{v, t}\right)$.
\IF{$t=1$}
\STATE $({\bm K}_{v, t})^{-1} \leftarrow 1/K(\tilde{\bm x}_{v, t}, \tilde{\bm x}_{v, t}) + \lambda$.
\STATE $\bm y_{v} \leftarrow [y_{v, 0}]$.
\STATE $\bm \kappa_v = (K(\cdot, \tilde{\bm x}_{v, t}))$.
\ELSE
\STATE $\bm B_{v} \leftarrow \bm B_{v} \cup \left(\tilde{\bm x}_{v, t}, y_{v,t}\right)$.
\ENDIF
\STATE $\bm m_{v, t} \leftarrow \left\langle t, v, \tilde{\bm x}_{v, t}, y_{v, t}\right\rangle$.
\STATE \textsc{SendMessage}$\left(\bm m_{v, t}\right)$.
\FOR{$\langle t', v', \tilde{\bm x}', y'\rangle$ in \textsc{RecvMessages}$(v, t)$}
\IF{$v' \in \textsc{Clique}(v, \mathbf C_\gamma)$}
\STATE $\bm B_{v} \leftarrow \bm B_{v} \cup \left(\tilde{\bm x}', y'\right)$.
\ENDIF
\ENDFOR
\FOR{$(\tilde{\bm x}', y') \in \bm B_{v}$}
\STATE $\bm y_{v} \leftarrow [\bm y_{v}, y']$.
\STATE $\bm \kappa_v = (\bm \kappa_v , K(\cdot, \tilde{\bm x}'))$.
\STATE $\bm K_{22} \leftarrow \left(K(\tilde{\bm x}', \tilde{\bm x}') + \lambda - (\bm \kappa_v)^\top ({\bm K}_{v, t})^{-1} \bm \kappa_v\right)^{-1}$.
\STATE $\bm K_{11} \leftarrow \left(({\bm K}_{v, t})^{-1} + \bm K_{22}({\bm K}_{v, t})^{-1}\bm \kappa_v(\bm \kappa_v)^\top ({\bm K}_{v, t})^{-1}\right)$.
\STATE $\bm K_{12} \leftarrow -\bm K_{22}({\bm K}_{v, t})^{-1}\bm \kappa^\tau_v$.
\STATE $\bm K_{21} \leftarrow -\bm K_{22}(\bm \kappa_v)^\top({\bm K}_{v, t})^{-1}$.
\STATE $({\bm K}_{v, t})^{-1} \leftarrow [\bm K_{11}, \bm K_{12}; \bm K_{21}, \bm K_{22}]$.
\ENDFOR
\STATE $\bm B_{v} = \phi$.
\STATE $\hat f_{v, t+1} \leftarrow \left(\bm \kappa_v\right)^\top({\bm K}_{v, t})^{-1}\bm y_{v}$.
\STATE $s_{v, \rho+1} \leftarrow \sqrt{K(\cdot, \cdot) - \left(\bm \kappa_v\right)^\top({\bm K}_{v, t})^{-1}\bm \kappa_v}$.
\ENDFOR
\ENDFOR
\end{algorithmic}
\end{algorithm}
\begin{remark}[Computational complexity]
As outlined in~\citet{valko2013finite}, it is possible to perform an $\mathcal O(1)$ update of the Gram matrix, via the Schur decomposition (l. 30-34, Algorithm~\ref{alg:main_widthdelay}). This update can also be applied when $K_z$ is unknown and approximated online, see Section~\ref{sec:approx}.
\end{remark}
Algorithm~\ref{alg:main_widthdelay} presents the \textsc{Coop-KernelUCB} algorithm with a tunable exploration parameter $\eta$, which can be different from the parameter $\beta_{v, t}$ used in the analysis, as is typical in this setting~\cite{gentile2014online, chowdhury2017kernelized}. We now present a regret bound for this algorithm.

\begin{theorem}[Group Regret under Delayed Communication]
Let $\mathbf C$ be a minimal clique covering of $\mathcal G_\gamma$. When $\mathcal D_{v, t}$ is continuum-armed, Algorithm~\ref{alg:main_widthdelay} incurs a per-agent average regret that satisfies, with probability at least $1-\delta$,
\begin{multline*}
\widehat{\mathcal R}(T) = \mathcal O\Bigg(\sqrt{\bar{\chi}(\mathcal G_\gamma)\cdot \frac{T}{V}}\bigg(R\cdot \widehat{\Upsilon}_T \\ + \sqrt{\widehat{\Upsilon}_T }\bigg(B + R\sqrt{2\log\frac{V\lambda}{\delta}}\bigg)\bigg)\Bigg).
\end{multline*}
$\text{Here } \widehat{\Upsilon}_T  = \max_{\mathcal C \in \bm C}\left[\log\det\left(\frac{1}{\lambda}{\bm K}_{\mathcal C, T} + \bm I\right)\right]$ is the overall information gain, and for any clique $\mathcal C \in \mathbf C$, the matrix $\bm K_{\mathcal C, T}$ is the Gram matrix formed by actions from all agents within $\mathcal C$ until time $T$, i.e. $(\tilde{\bm x}_{v, t})_{v \in \mathcal C, t \in [T]}$.
\end{theorem}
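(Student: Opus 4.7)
The plan is a kernelised UCB regret analysis in which the network structure enters through a minimum clique cover of the $\gamma$-power graph $\mathcal{G}_\gamma$. First I would combine Lemma~1 with a union bound over $v \in \mathcal V$ to obtain, with probability at least $1-\delta$, a simultaneous confidence bound $|F(\tilde{\bm x}) - \widehat F_{v,t}(\tilde{\bm x})| \leq \sqrt{\beta_{v,t}}\,\sigma_{v,t-1}(\tilde{\bm x})$ with $\sqrt{\beta_{v,t}} = B + R\sqrt{\log\det(\lambda\bm I + \bm K_{v,t})/\delta^2 + 2\log V}$ (I am treating the $\sigma^2$ in the lemma statement as the standard $\sqrt{\beta}\,\sigma$ form). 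The usual optimism argument, using that $\bm x_{v,t}$ maximizes the UCB on $\mathcal D_{v,t}$ and that $\bm x^*_{v,t} \in \mathcal D_{v,t}$, then yields the per-agent instantaneous regret bound $r_{v,t} \le 2\sqrt{\beta_{v,t}}\,\sigma_{v,t-1}(\tilde{\bm x}_{v,t})$.

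Next I would exploit the algorithm's clique filter: an agent $v\in\mathcal C$ only stores messages originating within the same clique $\mathcal C \in \mathbf C$, so its data is contained in the clique-wide trajectory $(\tilde{\bm x}_{v',s})_{v'\in\mathcal C,\,s\le t}$. Partitioning and applying Cauchy--Schwarz inside each clique gives
\begin{equation*}
\sum_{v\in\mathcal C}\sum_{t=1}^T\sqrt{\beta_{v,t}}\,\sigma_{v,t-1}(\tilde{\bm x}_{v,t})
\le \sqrt{|\mathcal C|\,T\,\beta_{\mathcal C,T}}\cdot\sqrt{\textstyle\sum_{v\in\mathcal C,t}\sigma_{v,t-1}^2(\tilde{\bm x}_{v,t})},
\end{equation*}
where $\beta_{\mathcal C,T} = \max_{v\in\mathcal C,\,t\le T}\beta_{v,t}$ is controlled by the submatrix--determinant inequality $\log\det(\lambda\bm I+\bm K_{v,t})\le\log\det(\lambda\bm I+\bm K_{\mathcal C,T})$, so that $\beta_{\mathcal C,T}\lesssim (B+R\sqrt{\widehat{\Upsilon}_T+\log(V/\delta)})^2$.

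The main obstacle is bounding $\sum_{v\in\mathcal C}\sum_{t=1}^T\sigma_{v,t-1}^2(\tilde{\bm x}_{v,t})$ by $O(\log\det(\bm I+\bm K_{\mathcal C,T}/\lambda))=O(\widehat{\Upsilon}_T)$, because the data available at agent $v$ at time $t$ lags the full clique history by up to $\gamma$ rounds. I would handle this in the style of Desautels et al.\ (2014) for batched GP-UCB: posterior variance is monotone non-increasing in the dataset, so $\sigma_{v,t-1}^2(\tilde{\bm x}_{v,t})$ under the delayed clique data is upper-bounded, via a ratio-of-determinants inequality, by a $\gamma$-dependent constant times the variance under the complete clique history. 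Once the delay is paid for as a constant factor, the standard telescoping identity $x\le(\log(1+x^{-1}))^{-1}\log(1+x)$ applied to $\sigma_{v,t-1}^2/\lambda$ along the clique-wide action sequence of length $|\mathcal C|T$ yields $\sum_{v\in\mathcal C,t}\sigma_{v,t-1}^2(\tilde{\bm x}_{v,t})\lesssim\log\det(\bm I+\bm K_{\mathcal C,T}/\lambda)\le\widehat{\Upsilon}_T$.

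Finally, summing the per-clique bound over $\mathbf C$ and invoking Cauchy--Schwarz $\sum_{\mathcal C\in\mathbf C}\sqrt{|\mathcal C|}\le\sqrt{|\mathbf C|\sum_{\mathcal C}|\mathcal C|}=\sqrt{\bar{\chi}(\mathcal G_\gamma)\,V}$ (using that $\{\mathcal C\}$ partitions $\mathcal V$ and $|\mathbf C|=\bar{\chi}(\mathcal G_\gamma)$) gives $\mathcal R_{\mathcal G}(T) \lesssim \sqrt{\bar{\chi}(\mathcal G_\gamma)\,V\,T\,\beta_T\,\widehat{\Upsilon}_T}$. Dividing by $V$ and expanding $\sqrt{\beta_T\widehat{\Upsilon}_T}\lesssim R\widehat{\Upsilon}_T+\sqrt{\widehat{\Upsilon}_T}(B+R\sqrt{2\log(V\lambda/\delta)})$ reproduces the per-agent average regret in the theorem.
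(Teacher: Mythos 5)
Your overall architecture matches the paper's: a union bound over agents on the self-normalized confidence inequality, the optimism argument giving $r_{v,t}\le 2\sqrt{\beta_{v,t}}\,\sigma_{v,t-1}(\tilde{\bm x}_{v,t})$, a clique-by-clique decomposition with Cauchy--Schwarz, a bound on the per-clique sum of squared variances by $\log\det(\bm I + \bm K_{\mathcal C,T}/\lambda)$, the submatrix-determinant bound $\beta_{v,T}\lesssim B+R\sqrt{\widehat\Upsilon_T+\log(V/\delta)}$, and the count $|\mathbf C|=\bar\chi(\mathcal G_\gamma)$. (You are also right to read the $\sigma^2$ in the statement of Lemma~1 as $\sigma$.) The paper applies Cauchy--Schwarz globally over all $V(T-\gamma)$ terms and only then splits the squared sum over cliques, whereas you do it clique by clique and recombine with a second Cauchy--Schwarz; these are equivalent up to constants.

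The one substantive divergence is the treatment of the $\gamma$-round lag, and here your plan has a gap. You propose the Desautels-style route: bound the delayed-data posterior variance by a ``$\gamma$-dependent constant'' times the full-clique-history variance via a ratio-of-determinants inequality. But that ratio is $\exp(C)$ where $C$ is the conditional information gain of the up to $\gamma|\mathcal C|$ pending observations given the received ones; it is not an absolute constant, and in GP-BUCB it is only tamed by a dedicated initialization phase that Algorithm~1 does not have. Carried through, this would insert an uncontrolled multiplicative factor into the bound. The paper's Lemma~3 avoids this entirely: it compares agent $v$'s variance at time $t$ to that of a hypothetical round-robin ``clique agent'' that has seen all clique data up to time $t-\gamma$ (Schur-complement monotonicity of the posterior variance), shifts the time index by $\gamma$, and absorbs the mismatch as an \emph{additive} boundary term $\gamma|\mathcal C|B$, after which the standard telescoping $\prod_\tau(1+\sigma^2_{\mathcal C,\tau})=\det(\bm I+\bm K_{\mathcal C,T}/\lambda)$ applies to the single round-robin sequence. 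If you replace your multiplicative-constant step with this additive accounting (equivalently, handle the rounds $t<\gamma$ separately, as the paper does with its $2\gamma\sqrt{B}\beta_{v,\gamma}$ term), the rest of your argument goes through and recovers the stated bound.
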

We first discuss the leading factors in the bound. Compared to single-agent bounds, a coarse approximation of our rate reveals an additional factor of $\mathcal O\left(\sqrt{\bar{\chi}(\mathcal G_\gamma)}\right)$. This factor arises from the delayed spread of information, and is equal to the minimum clique number of the $\gamma$ power graph of the communication network. When $\mathcal G$ is $\gamma$-complete (i.e., $\mathcal G_\gamma$ is complete), $\bar{\chi}(\mathcal G_\gamma) = 1$, providing us the best rate. An example topology when this condition is realized include $\gamma/2$-star graphs (one node at the center, and `spikes' of $\gamma/2$ nodes). Conversely, since we assume $\mathcal G$ is connected, in the worst-case graph, $\bar{\chi}(\mathcal G_\gamma) = \ceil{V/\gamma}$. in which case the regret bound is equivalent to that obtained when all agents run in isolation. This is achieved, for example, in a line graph. Now, we formalize the idea of heterogeneity among agents.
\begin{definition}[Heterogeneity]
In a multi-agent setting with $V$ agents and context vector space $\mathcal Z$ with kernel $K_z$, let $\bm K_z$ be the matrix of pairwise interactions, i.e., $\bm K_z = (K(\bm z_v, \bm z_{v'}))_{v, v' \in \mathcal V}$. Then, the corresponding \textbf{heterogeneity} $\Upsilon_z$ for this setting is defined as $\Upsilon_z = \text{rank}(\bm K_z)$. 
\end{definition}

For the composition considered in our paper, we can derive a regret bound in terms of $\Upsilon_{VT}^x$, i.e., the information gain from $VT$ actions $\bm x$ across agents and heterogeneity $\Upsilon_z$.
\begin{corollary}
When $K = K_z \odot K_x$, Algorithm~\ref{alg:main_widthdelay} incurs a per-agent average regret, w. p. at least $1-\delta$,
\begin{align*}
    \widehat{\mathcal R}(T) = \widetilde{\mathcal O}\Bigg(\Upsilon_z\cdot \Upsilon_{VT}\cdot\sqrt{\bar{\chi}(\mathcal G_\gamma)\cdot \frac{T}{V}\cdot\log\left(\frac{V\lambda}{\delta}\right)}\Bigg).
\end{align*}
\label{corr:fusion_effect}
\end{corollary}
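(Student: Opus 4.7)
The corollary follows from the theorem by bounding the empirical information gain $\widehat{\Upsilon}_T = \max_{\mathcal{C}\in\mathbf{C}}\log\det(\bm{I} + \bm{K}_{\mathcal{C},T}/\lambda)$ in terms of $\Upsilon_z$ and $\Upsilon^x_{VT}$. My plan is to exploit the Hadamard factorisation $K = K_z \odot K_x$ together with the low rank $\Upsilon_z = \operatorname{rank}(\bm{K}_z)$ of the $V\times V$ agent Gram matrix, decomposing each clique Gram matrix $\bm K_{\mathcal C, T}$ into $\Upsilon_z$ PSD pieces, each of which is controlled by the pure-$K_x$ information gain on at most $VT$ points.

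Fix a clique $\mathcal{C}\in\mathbf{C}$ and set $N = |\mathcal{C}|T$. Writing the spectral decomposition $\bm{K}_z = \sum_{k=1}^{\Upsilon_z}\psi_k\psi_k^\top$ with $\psi_k\in\mathbb{R}^V$, and letting $\bm{\Psi}_k$ be the $N\times N$ diagonal matrix with $[\bm{\Psi}_k]_{(v,t),(v,t)}=\psi_k[v]$, the Hadamard identity yields
\begin{equation*}
  \bm{K}_{\mathcal{C},T} = \bm{K}_z^{\mathcal{C}}\odot\bm{K}_x^{\mathcal{C}} = \sum_{k=1}^{\Upsilon_z} \bm{\Psi}_k\bm{K}_x^{\mathcal{C}}\bm{\Psi}_k,
\end{equation*}
where $\bm K_x^{\mathcal C}$ is the $K_x$-Gram matrix of the $N$ actions taken by agents in $\mathcal{C}$ and each summand is PSD. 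I then apply the sub-additivity $\log\det(\bm{I}+\sum_k A_k)\leq \sum_k \log\det(\bm{I}+A_k)$ for PSD $A_k$, combined with Sylvester's identity $\det(\bm{I}+\bm{\Psi}_k M \bm{\Psi}_k) = \det(\bm{I}+\bm{\Psi}_k^2 M)$, to obtain
\begin{equation*}
  \log\det\bigl(\bm{I}+\tfrac{1}{\lambda}\bm{K}_{\mathcal{C},T}\bigr) \leq \sum_{k=1}^{\Upsilon_z}\log\det\bigl(\bm{I}+\tfrac{1}{\lambda}\bm{\Psi}_k^2 \bm{K}_x^{\mathcal{C}}\bigr).
\end{equation*}
Since $K_z(\bm z, \bm z)\leq 1$, each diagonal entry $\psi_k[v]^2$ is bounded by the eigenvalue $\mu_k\leq \operatorname{tr}(\bm K_z) \leq V$, so $\bm{\Psi}_k^2 \preceq V \bm I$. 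By monotonicity of $\log\det$, each term is at most $\log\det(\bm I + (V/\lambda) \bm K_x^{\mathcal C})$, which in turn equals $\widetilde{\mathcal O}(\Upsilon^x_{VT})$ since $\bm K_x^{\mathcal C}$ has at most $VT$ rows and the rescaling by $V/\lambda$ only contributes a $\log V$ correction. Summing over $k\in[\Upsilon_z]$ yields $\widehat{\Upsilon}_T = \widetilde{\mathcal O}(\Upsilon_z \cdot \Upsilon^x_{VT})$.

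Plugging this into the theorem's bound and observing that the $R\widehat\Upsilon_T$ term dominates the $\sqrt{\widehat\Upsilon_T \log(V\lambda/\delta)}$ term once $\Upsilon_z\Upsilon^x_{VT}$ is moderately large, I recover the claimed rate $\widetilde{\mathcal O}(\Upsilon_z\Upsilon^x_{VT}\sqrt{\bar{\chi}(\mathcal{G}_\gamma)(T/V)\log(V\lambda/\delta)})$, with $B,R$ treated as constants and polylogarithmic factors absorbed into the $\widetilde{\mathcal O}$.

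The main technical obstacle is the sub-additivity step $\log\det(\bm I + A + B)\leq\log\det(\bm I + A)+\log\det(\bm I + B)$ for PSD $A, B$; while this is a classical matrix inequality, it needs a careful invocation because the summands $\bm\Psi_k \bm K_x^{\mathcal C} \bm\Psi_k$ do not commute and some $\psi_k[v]$ can be negative (the latter does not affect the PSD-ness of each summand but is a common pitfall). A secondary subtlety is verifying that the $V/\lambda$ rescaling only pays a $\log V$ factor, so that one genuinely recovers $\Upsilon^x_{VT}$ on the right-hand side rather than $V\cdot \Upsilon^x_{VT}$; this follows from the standard bound $\log\det(\bm I + c M)\leq \operatorname{rank}(M)\log(1+c) + \log\det(\bm I + M)$ for $c\geq 1$.
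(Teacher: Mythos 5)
Your route is genuinely different from the paper's: the paper proves this corollary in one line by invoking its Lemma~3 (appendix), whose proof in turn simply cites Theorem~2 of \citet{krause2011contextual} to get $\widehat{\Upsilon}_T \le 2\Upsilon_z(\Upsilon^x_T + \log T)$ and substitutes into Theorem~1. You instead re-derive that bound from scratch via the spectral decomposition $\bm K_z = \sum_k \psi_k\psi_k^\top$, the Hadamard identity $\bm K_z^{\mathcal C}\odot\bm K_x^{\mathcal C} = \sum_k \bm\Psi_k\bm K_x^{\mathcal C}\bm\Psi_k$, and subadditivity of $\log\det(\bm I + \cdot)$ over PSD summands. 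That skeleton is sound (and is essentially the mechanism inside the cited theorem), so as a self-contained alternative it is a reasonable plan, and you correctly flag that the subadditivity step is the load-bearing classical inequality.

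There is, however, a genuine error in the step controlling $\bm\Psi_k^2$. You bound $\psi_k[v]^2 \le \mu_k \le \operatorname{tr}(\bm K_z)\le V$ and then claim the rescaling by $V/\lambda$ ``only contributes a $\log V$ correction'' via $\log\det(\bm I + cM)\le \operatorname{rank}(M)\log(1+c)+\log\det(\bm I + M)$. But $\operatorname{rank}(\bm K_x^{\mathcal C})$ is generically $|\mathcal C|T$ (for a universal kernel such as the RBF, the Gram matrix of distinct points is full rank), so the additive correction is $|\mathcal C|T\log(1+V)$, not $\log V$; this dwarfs $\Upsilon^x_{VT}$ for every kernel family in Definition~1 and destroys the claimed rate. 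The fix is to use the sharper pointwise bound $\psi_k[v]^2 \le \sum_{k'}\psi_{k'}[v]^2 = (\bm K_z)_{vv} = K_z(\bm z_v,\bm z_v)\le 1$, which gives $\bm\Psi_k^2\preceq \bm I$ directly; then no rescaling is needed, each summand is at most $\log\det(\bm I + \tfrac{1}{\lambda}\bm K_x^{\mathcal C}) = \mathcal O(\Upsilon^x_{VT})$, and summing over the $\Upsilon_z$ spectral components recovers $\widehat{\Upsilon}_T = \widetilde{\mathcal O}(\Upsilon_z\cdot\Upsilon^x_{VT})$. With that repair your argument goes through and matches the paper's conclusion.
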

\begin{remark}
The regret bound displays a smooth interaction between the network structure, communication delays and agent similarity\footnote{We demonstrate this smooth relationship for the product kernel, i.e. $ K = K_z \odot K_x$, however, alternate relationships are worth exploring. For more details and results on some forms of kernel compositions, see Appendix.}. Corollary~\ref{corr:fusion_effect} implies that the communication effectively acts as a ``mask'' on the underlying cooperative performance, which is controlled primarily by the proximity of the network contexts themselves. For instance, when network contexts are identical (\textit{fully-cooperative}), $\Upsilon_z = 1$, and then the network structure entirely determines the regret (via $\bar\chi(\mathcal G_\gamma)$). Conversely, if $\bm K_z$ is full-rank, then $\Upsilon_z = V$, and agents cannot leverage cooperation. In this case, no improvement can be obtained regardless of the density of $\mathcal G$.
\end{remark}

\textbf{\textit{Examples}}. We now provide a few examples to illustrate the problem setting. Consider the case when $K_x$ and $K_z$ are both linear. In this case, the algorithm can be understood as a weighted variant of the linear UCB algorithm: for an observation $\bm x_{v, t}$ from an agent $v$ to $v'$, the ``weighted'' observation is given by $(\bm z_v^\top \bm z_{v'}) \cdot \bm x_{v, t}^\top \bm x_{v, t}$, and hence the ``weight'' is  $(\bm z_v^\top \bm z_{v'})$. In an ideal implementation, the vectors $\bm z \in \mathbb S_{d-1}(1)$, i.e., the unit sphere in $d$ dimensions, such that $\bm z^\top \bm z = 1$, ensuring that personal observations are given a weight of 1. Alternatively, when both $K_z$ and $K_x$ are RBF kernels, we observe an additive effect, i.e., $K = \exp\left(-\frac{\lVert \bm z_v - \bm z_{v'} \rVert^2}{2\sigma_z^2} - \frac{\lVert \bm x_v - \bm x_{v'} \rVert^2}{2\sigma_x^2}\right) = \exp\left(-\frac{\lVert \hat{\bm x}_v - \hat{\bm x}_{v'}\rVert^2}{2}\right)$, where $\hat{\bm x} = \begin{bmatrix} \sfrac{\bm x}{\sigma_x} \\ \sfrac{\bm v}{\sigma_v}\end{bmatrix}$. Note that the additional factor incurred in comparison to single-agent learning is $\Upsilon_z = \mathcal O(d\ln V)$ for RBF $K_z$, and for any action or network kernel, the regret can be obtained via Remark~\ref{remark:single_agent}.

\subsection{Approximating Network Contexts}
\label{sec:approx}
The previous analysis assumes the availability of the underlying \textit{network context} vectors $\bm z_v$ for each agent (or at least oracle access to the kernel $K_z$), however, for many applications, this information is not available, and must be estimated from the contexts themselves. Our approach is based on \textit{kernel mean embeddings} \cite{blanchard2011generalizing, christmann2010universal, deshmukh2017multi}.

Consider the network context space $\mathcal Z$ to be the RKHS $\mathcal H_{K_x}$, and we assume that the contexts $\bm x_{v, t}$ for each agent are drawn from an underlying probability density $\mathcal P_v$. The idea is to use $\bm z_v$ as a representation of $\mathcal P_v$, so that we can (with an appropriate metric), use $K_z$ as a measure of ``similarity'' of the context distributions. For this, we look towards \textit{kernel mean embeddings} of the distributions $\mathcal P_v$ in the RKHS $\mathcal H_{K_x}$. This implies that the augmented context $\tilde{\bm x}_{v, t}$ at any time $t$ for any agent $v \in V$ is $\left(\Psi(\mathcal P_v), \bm x_{v, t}\right)$, where $\Psi(\mathcal P_v) = \mathbb E_{\bm x \sim \mathcal P_v} [\phi_x(\bm x)] = \mathbb E_{\bm x \sim \mathcal P_v} [K_x(\cdot, \bm x)]$ is the kernel mean embedding of $\mathcal P_v$ in $\mathcal H_{K_x}$.  Using this, we can define the kernel $K_z$ as follows.
\begin{equation}
    K_{z}\left(\Psi(\mathcal P_v), \Psi(\mathcal P_{v'})\right) = \exp\left(-\lVert \Psi(P_v) - \Psi(P_{v'})\rVert_2 / 2\sigma_z^2\right).
\end{equation}
We can estimate this from the available context via the \textit{empirical mean kernel embedding} $\widehat{\Psi}_t(\mathcal P_v) = \frac{1}{t}\sum_{i=1}^t K_x\left(\cdot, \bm x_{v, i}\right)$. Now, we can calculate the \textit{empirical kernel approximation} $\widehat{K}_{z, t}(\cdot, \cdot)$ at time $t$: 
\begin{align}
    \widehat{K}_{z, t}(\mathcal P_v, \mathcal P_{v'}) &= \exp\left(-\text{MMD}_{\mathcal H}(\widehat{\Psi}_t(\mathcal P_v), \widehat{\Psi}_t(\mathcal P_{v'})) / 2\sigma_z^2\right),
\end{align}
The empirical maximum mean discrepancy (MMD)~\cite{gretton2012kernel} is the measure employed to measure the divergence of the embeddings in $\mathcal H_{K_x}$, and is given by:
\begin{multline}
    \text{MMD}^2_{\mathcal H}(\widehat{\Psi}_t(\mathcal P_v), \widehat{\Psi}_t(\mathcal P_{v'})) =\sum_{\tau,\tau'}^{t, t} (K_x(\bm x_{v, \tau}, \bm x_{v, \tau'}) \\
    + K_x(\bm x_{v', \tau}, \bm x_{v', \tau'}) - 2K_x(\bm x_{v, \tau}, \bm x_{v', \tau'})).
\end{multline}
Our next result describes how the approximation (constructed from $t$ samples each) $\widehat{K}_t = \widehat{K}_{z, t} \odot K_x$ deviates from the true kernel $K = K_z \odot K_x$ under this model.
\begin{lemma}
For an RKHS $\mathcal H$, assume that $\lVert f \rVert_{\infty} \leq d$ for all $f \in \mathcal H$ with $\lVert f \rVert_{\mathcal H} \leq 1$. Then, the following is true with probability at least $1-\delta$ for all $\bm x_i, \bm x_j \in \widetilde{\mathcal X}$:
\begin{equation*}
\resizebox{\linewidth}{!} 
{
    $\left| \log \left(\frac{\widehat{K}_{z, t}\left(\bm x_i, \bm x_j\right)}{K_z\left(\bm x_i, \bm x_j\right)}\right)\right| \leq \frac{1}{\sigma_z^2}\left(\underset{\mathcal P \in \mathfrak P_{\mathcal X}}{\sup}\mathfrak R_t(\mathcal H, \mathcal P) + 2d\sqrt{\frac{1}{2t}\log\frac{1}{\delta}}\right).$}
\end{equation*}
Here $\mathfrak R_t(\mathcal H, \mathcal P_v)$ denotes the $t$-sample Rademacher average~\cite{bartlett2002rademacher} of $\mathcal H$ under $\mathcal P_v \in \mathfrak P_{\mathcal X}$.
\label{lem:rkhs}
\end{lemma}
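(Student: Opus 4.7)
The plan is to reduce the bound on the log-ratio of kernels to a concentration statement about empirical kernel mean embeddings. First, I would use the definitions directly: since both $K_z$ and $\widehat K_{z,t}$ are exponentials of scaled MMDs, and MMD between mean embeddings is precisely the RKHS norm of their difference, we get
\begin{equation*}
\left|\log\frac{\widehat K_{z,t}(\mathcal P_v,\mathcal P_{v'})}{K_z(\mathcal P_v,\mathcal P_{v'})}\right| = \frac{1}{2\sigma_z^2}\Bigl|\,\|\widehat\Psi_t(\mathcal P_v)-\widehat\Psi_t(\mathcal P_{v'})\|_{\mathcal H} - \|\Psi(\mathcal P_v)-\Psi(\mathcal P_{v'})\|_{\mathcal H}\,\Bigr|.
\end{equation*}
Applying the reverse triangle inequality in $\mathcal H_{K_x}$ then upper bounds the right-hand side by $(2\sigma_z^2)^{-1}\bigl(\|\Psi(\mathcal P_v)-\widehat\Psi_t(\mathcal P_v)\|_{\mathcal H} + \|\Psi(\mathcal P_{v'})-\widehat\Psi_t(\mathcal P_{v'})\|_{\mathcal H}\bigr)$, so the problem reduces to controlling the deviation of a single empirical mean embedding from its population counterpart.

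Next, I would use the dual representation of the RKHS norm: by the reproducing property,
\begin{equation*}
\|\Psi(\mathcal P) - \widehat\Psi_t(\mathcal P)\|_{\mathcal H} = \sup_{\|f\|_{\mathcal H}\le 1}\Bigl|\mathbb E_{\bm x\sim \mathcal P}[f(\bm x)] - \tfrac{1}{t}\sum_{i=1}^t f(\bm x_i)\Bigr|.
\end{equation*}
This is a uniform law of large numbers over the unit ball of $\mathcal H$. The hypothesis $\|f\|_\infty\le d$ for $\|f\|_{\mathcal H}\le 1$ gives bounded differences of order $2d/t$, so McDiarmid's inequality controls the deviation of this supremum from its expectation by $d\sqrt{2\log(1/\delta)/t}$ with probability at least $1-\delta$. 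A standard symmetrization argument then bounds the expectation by the $t$-sample Rademacher complexity $\mathfrak R_t(\mathcal H,\mathcal P)$, which in turn is at most $\sup_{\mathcal P\in\mathfrak P_{\mathcal X}}\mathfrak R_t(\mathcal H,\mathcal P)$, making the tail bound distribution-free.

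Finally, I would combine the single-embedding bound for $\mathcal P_v$ and $\mathcal P_{v'}$ via a union bound (adjusting the confidence level by a constant), substitute into the reverse-triangle-inequality estimate, and divide by $2\sigma_z^2$. The factor of $2$ from summing the two embedding errors cancels the $1/2$, delivering exactly the stated form $\sigma_z^{-2}\bigl(\sup_{\mathcal P}\mathfrak R_t(\mathcal H,\mathcal P) + 2d\sqrt{\log(1/\delta)/(2t)}\bigr)$.

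The main obstacle I expect is the bookkeeping of constants: different authors absorb the symmetrization factor of $2$ into the definition of $\mathfrak R_t$, and the McDiarmid bound needs the bounded-differences constant to be computed carefully from $\|f\|_\infty\le d$ (giving $2d/t$, not $d/t$). Getting the bound to match exactly — without an extraneous factor of $2$ on the Rademacher term and with $2d\sqrt{\log(1/\delta)/(2t)}$ rather than $d\sqrt{2\log(2/\delta)/t}$ — will require picking a specific normalization of $\mathfrak R_t$ and a specific form of the symmetrization lemma. Everything else (reverse triangle inequality, duality, McDiarmid, union bound) is standard.
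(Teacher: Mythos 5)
Your proposal follows essentially the same route as the paper: the paper also reduces the log-ratio to single-embedding deviations via the reverse triangle inequality (followed by the ordinary triangle inequality, which it mislabels as Cauchy--Schwarz) and a union bound at level $\delta/2$ over the two distributions. The only substantive difference is that where you re-derive the concentration of $\lVert \Psi(\mathcal P)-\widehat\Psi_t(\mathcal P)\rVert_{\mathcal H}$ from scratch (duality over the unit ball, McDiarmid with bounded-differences constant $2d/t$, then symmetrization), the paper invokes the corresponding bound of Smola et al.\ as a black-box lemma; your derivation is exactly the standard proof of that cited result, and your constant $2d\sqrt{\log(1/\delta)/(2t)}$ is computed correctly. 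The caveats you flag are real rather than hypothetical: the paper's cited lemma carries a $2\mathfrak R_t$ term and its $\delta/2$ union bound would yield $\log(2/\delta)$, so the paper's own argument recovers the stated inequality only up to precisely the constant-factor slack (symmetrization factor and union-bound constant) that you identify.
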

Lemma~\ref{lem:rkhs} implies the \textit{consistency} of the empirical Kernel estimator, i.e., for any $v, v' \in \mathcal V$, $\widehat K_{z, t} \rightarrow K_z$ with probability 1 as $t \rightarrow \infty$. To obtain $K_{z}$ we can employ any other PSD kernel $K_{P}$ on $\mathcal X$ besides $K_x$ as well. 
\begin{remark}[Regret of Simultaneous Estimation]
At any instant, the empirical heterogeneity is locally controlled, i.e., for a clique cover $\mathbf C$ of $\mathcal G_\gamma$, $\Upsilon_z \leq \max_{\mathcal C \in \mathbf C} |\mathcal C|^2$. This follows directly from Theorem 2 of~\citet{krause2011contextual} and the fact that for any agent in a clique $\mathcal C$, the empirical kernel approximation only takes $\sfrac{1}{2}\left(|\mathcal C|\cdot(|\mathcal C| - 1)\right)$ distinct values at any instant. This implies that sparse network settings can easily be shown to benefit from cooperation (i.e., when $|\mathcal C| = \mathcal O(V^{\sfrac{1}{4}})$), but future work can address stronger controls on the group regret.
\end{remark}
\section{Extensions}
\label{sec:extensions}
\textbf{\textit{Fully-Cooperative Setting}}. In the fully cooperative setting, the \textit{network contexts} for all agents are identical, and each agent is solving the same contextual bandit problem. When the decision set is fixed (i.e. $D_{v, t} = D \subset \mathcal X$ for all $v, t$, \citet{cesa2019delay}) we can derive a variant of \textsc{Coop-KernelUCB} that provides optimal performance. The central idea of the algorithm is for centrally-positioned agents to essentially follow Algorithm~\ref{alg:main_widthdelay}, however, the agents that are positioned peripherally in $\mathcal G$ mimic the actions (obtained after the appropriate delay) of these centrally positioned agents. We partition the set of agents $\mathcal G$ into ``central'' and ``peripheral'' agents such that each peripheral agent is connected to at least one central agent in $\mathcal G_\gamma$. This algorithm is defined as \textsc{Dist-KernelUCB} as this algorithm is not decentralized. It just remains to define the partition, which we do after introducing some notation.

\begin{definition}
An independent set of a graph $\mathcal G = (\mathcal V, \mathcal E)$ is a set of vertices $\mathcal V' \subseteq \mathcal V$ such that no two vertices in $\mathcal V'$ are connected. A maximal independent set $\mathcal V^*$ is the largest independent set, and independence number $\alpha(\mathcal G) = |\mathcal V^*|$.
\end{definition}
We set the ``central'' agents $\mathcal V_C$ of $\mathcal G$ as the maximal weighted independent set of $\mathcal G_\gamma$ (where, for any node $v \in \mathcal V$, the weight $w_v =  N_\gamma(v)$), and set the complement $\mathcal V_P = \mathcal V \setminus \mathcal \mathcal V_C$ as the ``peripheral'' set. Each peripheral agent $p$ is assigned the central agent it is connected to (denoted as $\text{cent}(p)$), and in case any peripheral agent is connected to more than one central agent, we assign it to the central agent with maximum degree in $\mathcal G_\gamma$. The set of peripheral agents assigned to a central agent $c$ is denoted by $\pi(c)$. We can then make the following claims about regret incurred in this setting.
\begin{theorem}
$\mathcal D_{v, t}$ is continuum-armed, \textsc{Dist-KernelUCB} incurs a per-agent average regret that satisfies, with probability at least $1-\delta$, 
\begin{align*}
    \widehat{\mathcal R}(T) = \widetilde{\mathcal O}\Bigg(\Upsilon_z\cdot \Upsilon_{VT}\cdot\sqrt{\alpha(\mathcal G_\gamma)\cdot \frac{T}{V}\cdot\log\left(\frac{V\lambda}{\delta}\right)}\Bigg).
\end{align*}
Here, $\alpha(\mathcal G_\gamma)$ refers to the independence number of  $\mathcal G_\gamma$.
\label{thm:fully_coop}
\end{theorem}
The central concept utilized in this case is to partition the network in a manner that allows for a group of agents to make identical (albeit delayed) decisions. The regret analysis uses the property that the vertex cover of the elements of $V_C$ spans $\mathcal G_\gamma$, and one can bound the total regret by simply bounding the regret incurred by each ``central'' agent, since for any ``peripheral'' agent $v$, the regret incurred is only a constant larger than the correponding regret incurred by the ``central'' agent, i.e., $\mathcal R_v(T) \leq \widetilde{O}(\sqrt{\gamma}) + \mathcal R_{\text{cent}(v)}(T)$. The full proof and algorithm pseudocode are in the appendix.
\begin{remark}
In addition to the tighter average per-agent regret (since $\alpha(\mathcal G_\gamma) \leq \bar\chi(\mathcal G_\gamma)$), we can make a stronger claim about the \textit{individual} regret for any agent as well. Both these regret bounds match the rates mentioned for the context-free case in~\cite{cesa2019delay, bar2019individual}. Moreover, when $\gamma = 1$, then the bound on the group regret matches the lower bound shown in the nonstochastic case~\cite{cesa2019cooperative}. 
\end{remark}

\textbf{\textit{Eager Estimation}}. In addition to the algorithms mentioned above, we also consider a (potentially stronger) variant of \textsc{Coop-KernelUCB} that does not take delays into account at all, and simply updates its observation set as soon as it obtains any new information (from any communicating agent). Consequently, for any arbitrary $\mathcal G$ and $\gamma$, this can lead to significant \textit{drift} in the Gram matrices for any pair of agents, making this algorithm (dubbed \textsc{Eager-KernelUCB}) significantly more challenging to analyze. We defer the analysis therefore to future work and present empirical evaluations of this variant in this paper. This algorithm can be understood as Algorithm~\ref{alg:main_widthdelay} run with all observations (i.e., lines 20-22 in Algorithm~\ref{alg:main_widthdelay} are ignored), although we present the complete pseudocode in the appendix.

\section{Experiments}
The central aspect we wish to experimentally understand is the behavior of the algorithm with respect to network structures and delay in cooperative learning (alternatively, a detailed experimental comparison of single-agent KernelUCB under Gaussian noise can be found in~\cite{srinivas2009gaussian, krause2011contextual}), and hence our experimental benchmark setup focuses on these aspects as well. We conduct two major lines of experimentation, the first on synthetically generated random networks, and the second on real-world networks subsampled from the SNAP network datasets~\cite{leskovec2016snap}.
\begin{figure*}[t!]
  \includegraphics[width=\linewidth]{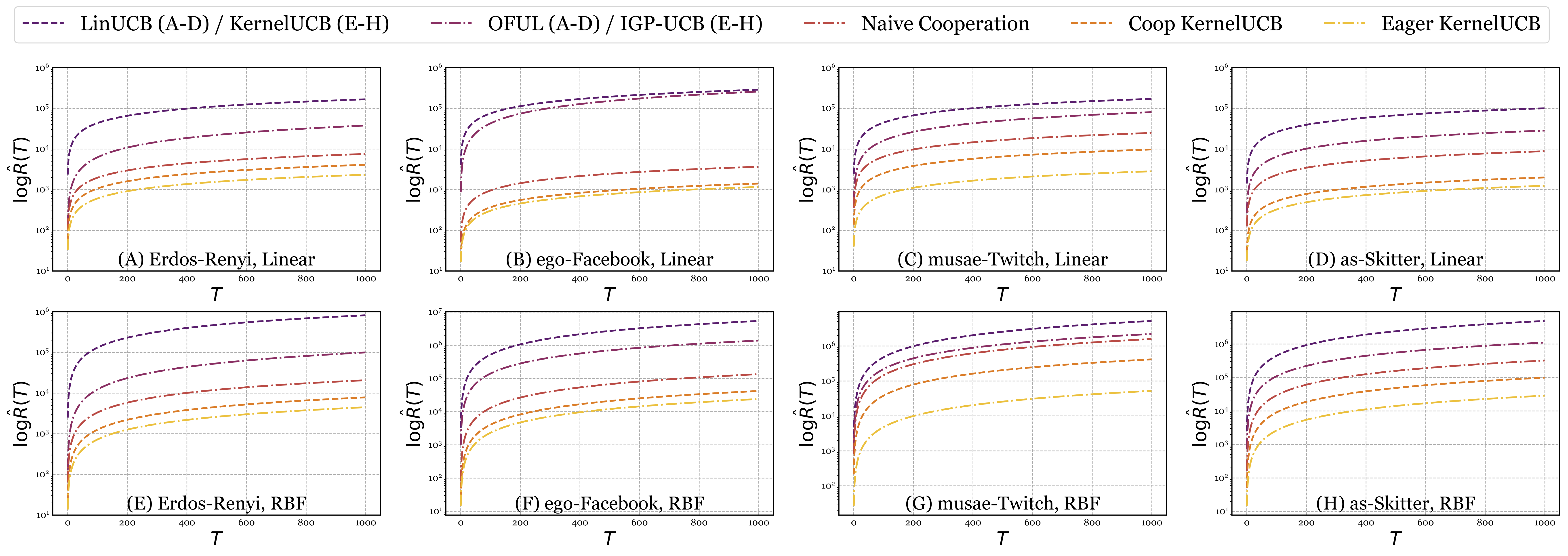}
  \caption{An experimental comparison of \textsc{COOP-KernelUCB} and its variants with benchmark techniques for contextual bandits. Each experiment is averaged over 100 trials. The top row denotes the linear kernel, and the bottom row denotes the RBF kernel.}
  \label{fig:boat1}
\end{figure*}

\textbf{\textit{Comparison Environments}}. We compare in two benchmark setups. In order to compare performance with linear methods, our first setup assumes $K_x$ is the linear kernel, and $K_z$ is a clustering of the agents given by the independent sets of the $\gamma^{th}$ power of the underlying connectivity graph $\mathcal G$ ($K_z$ not known to the algorithms \textit{a priori}, and $\gamma = \text{diameter}(\mathcal G)/2$). This is done to motivate the central application scenario where the network connectivity and task similarity are correlated. The second setup is where $K_x$ and $K_z$ are both randomly initialized Gaussian kernels (where $K_z$ is again unknown to our method). We run both setups on graphs of $V =200$ nodes, $D_{v, t} $ is a set of 8 randomly generated contexts for all $v \in \mathcal V, t \in T$ and dimensionality $d = 10$ for $\mathcal X$ and $\mathcal Z$ (for setup 2). For the kernel estimation task, we set $\sigma = 1$, and we set $\lambda = 1$.

\textbf{\textit{Network Structures}}. We run experiments on two network structures - (a) synthetic, randomly generated networks and (b) real-world networks. For the synthetic networks, we generate random connected Erdos-Renyi networks~\cite{erdHos1960evolution} of size $V = 200$ with $p=0.7$. For the synthetic networks, we subsample $V$ nodes and their corresponding edges (for $V = 200$) from the \texttt{ego-Facebook}, \texttt{musae-Twitch}, and \texttt{as-Skitter} networks, in order to represent a diverse set of networks found in social networks, peer-to-peer distribution and autonomous systems. 

\textbf{\textit{Benchmark Methods}}. In the linear setting, we compare against single-agent LinUCB~\cite{li2010contextual} (where every agent runs LinUCB independently), OFUL~\cite{abbasi2011improved} and \textsc{Coop-KernelUCB} and \textsc{Eager-KernelUCB}. In the kernel setting, we compare against single-agent KernelUCB~\cite{valko2013finite}, IGP-UCB~\cite{chowdhury2017kernelized}. Additionally, an important benchmark we compare against is \textit{Naive Cooperation}, where agents run IGP-UCB (kernel) and LinUCB (linear) but include observations from neighbors as their own (without reweighting).

\textbf{\textit{Results Summary}}.
Figure~\ref{fig:boat1} describes the regret achieved on each of the 4 benchmark networks for both linear and RBF (Gaussian) settings. Each plot is obtained after averaging the results for 100 trials, where the bandit contexts were refreshed every trial. We first highlight the general trend observed. Since the baseline techniques do not utilize cooperation at all, we expect them to provide a per-agent regret that scales linearly, instead of the $\mathcal O(1/\sqrt{V})$ dependence for our algorithms, which is obtained in our results as well. Among our algorithms, we see that \textsc{Coop-KernelUCB} and \textsc{Dist-KernelUCB} perform similarly for the Erdos-Renyi outperforms \textit{Naive Cooperation} in both the linear and kernel settings, which can be attributed to the fact that naive cooperation does not take agent similarities into account. We observe that \textsc{Eager-KernelUCB} consistently outperforms other algorithms, across all benchmark tasks. 

Our motivation for this algorithm stems from work in the delayed feedback regime for the stochastic (context-free) bandit~\cite{joulani2013online}, which suggests that incorporating observations \textit{as soon as} they are available can provide optimal regret. While it is challenging to derive a provably optimal algorithm in the contextual setting (and more challenging in the multi-agent case), we simply extended the ``as soon as'' heuristic in \textsc{Eager-KernelUCB}. The observed empirical regret suggests to us that \textsc{Eager-KernelUCB} obtains $O(\sqrt{\tfrac{T}{V}(\gamma + \alpha(\mathcal G_\gamma)))}$ regret, lower than the other variants of \textsc{Coop-KernelUCB}. The other variations between different graph families can be attributed to the difference in connectivity (instead of the kernel approximation).

\section{Discussion and Related Work}
This paper is inspired by and draws from concepts in several (often disparate) subfields within the literature. We discuss our contributions with respect to these areas sequentially.

\textbf{\textit{Cooperative Multi-Agent Learning}}. Cooperative bandit learning with delays has maintained the setting that all agents solve the \text{same} bandit problem (i.e., fully cooperative), which our work generalizes as a first step. In the nonstochastic (multi-armed) case (without delays), this problem was first studied in the work of~\citet{awerbuch2008online}, where they proposed an algorithm with a per-agent regret bound of $O(\sqrt{(1+KV^{-1})\ln T})$, which matches (up to logarithmic factors) our version of the bound in the same setting (with contexts). In the multi-armed case,~\cite{landgren2016distributed, landgren2016distributed2, landgren2018social, martinez2018decentralized} provide algorithms whose regret scales as a function of the graph Laplacian of $\mathcal G$, using a \textit{consensus} protocol~\cite{bracha1985asynchronous}. Our algorithms are based on a message-passing framework (i.e., \textsc{Local}), which maintains the same communication complexity, while providing significantly better regret guarantees. Moreover, we can express the consensus protocol as an instance (albeit restricted) of our algorithm, when $K_x(i, j) = \mu_i \bm 1\{i=j\}, \mu_i \in \mathbb R$ is a scaled simplex, and $K_z(i, j) = A_{ij}^{d(i, j)}$ is the power of the graph Laplacian. Algorithms for the nonstochastic non-contextual case with delays have been developed in~\cite{cesa2019delay, bar2019individual}, that propose algorithms with per-agent average regret scaling as  $\widetilde{\mathcal O}(\sqrt{\alpha(\mathcal G_\gamma)TV^{-1}})$ and individual regret (for agent $v \in V$) scaling as $\widetilde{\mathcal O}(\sqrt{(1+K|\pi(\text{cent}(v))|^{-1})T})$, which match the regret achieved by \textsc{Dist-KernelUCB} in the fully cooperative \textit{contextual} setting. A minimax regret bound for the nonstochastic context-free of $O(\sqrt{(\gamma + K)T})$ is also provided in~\cite{cesa2019delay}, improving on the work of~\citet{neu2010online}, which our work improves up to smaller network factors ($\sqrt{\bar\chi(\mathcal G_\gamma)}$). When we compare our regret bounds with the algorithm-agnostic delayed feedback regret bounds provided by~\cite{joulani2013online} for the single-agent case, we observe the same relationship.

\textbf{\textit{Leveraging Social Contexts}}. There has been extensive research in leveraging \textit{social} side-observations across the bandit literature.~\citet{cesa2013gang} provide an algorithm called \textit{GoB.Lin} that assumes an outer-product relationship between information flow in the network (via the graph Laplacian) and context information. This is exactly an instance of our framework, where the kernel $K(\bm x_i, \bm x_j)$ is described by $\tilde{\bm \phi}(\bm x_i)A_{\otimes}^{-1}\tilde{\bm \phi}(\bm x_j)$ (in their notation), extended to the (kernel) multi-agent case with delayed feedback. In their setting, our regret bounds match exactly those of GoB.Lin. The clustering formulation can also be seen as a variant of the kernel framework, where $K_z(\bm z_v, \bm z_{v'}) = 1$ if agents belong to the same cluster, and 0 otherwise. The clustering is not known~\textit{a priori}, however, and the work of~\cite{gentile2014online, gentile2017context, li2018online} provides algorithms with tight regret guarantees for this case (our kernel embedding technique is similar in this regard). Again, we highlight that while multi-agent decision-making has been studied in the social network case (with non-identical contexts)~\cite{korda2016distributed, wang2020distributed}, none, to our knowledge, consider general graph communication with delays. 

\textbf{\textit{Kernel Methods for Bandit Optimization}}. A theoretical treatment of kernelised bandit learning was first explored in the work of~\citet{valko2013finite}, which was built on the LinUCB~\cite{li2010contextual} and SupLinUCB~\cite{chu2011contextual}, that were in turn inspired by the early work of~\citet{auer2002finite}. Our work is an improvement on the single-agent algorithms provided by~\citet{valko2013finite} owing to the martingale inequality presented in~\cite{chowdhury2017kernelized}, who use their result to construct improved versions single-agent Gaussian Process bandit algorithms~\cite{krause2011contextual, srinivas2009gaussian}. Our work also improves on the multi-task framework introduced by~\cite{deshmukh2017multi} to the multi-agent setting with delays, along with a stronger regret bound, and an approximation guarantee for the \textit{kernel mean embedding} approach to estimate task similarity. Recent results~\cite{calandriello2019gaussian, janz2020bandit} on bandit optimization for certain kernel families can certainly be used to construct algorithmic variants with stronger guarantees on the context kernel $K_x$.
\section{Conclusion}
In this paper we presented \textsc{Coop-KernelUCB}, an kernelized algorithm for decentralized, multi-agent cooperative contextual bandits and proved regret bounds of $\widetilde{O}(\sqrt{\bar\chi(\mathcal G_\gamma)T/V})$ on the average pseudo-regret, and supported our theoretical developments with experimental performance. However, there are several aspects of the kernelised cooperative bandit problem that are left as open problems. An interesting first direction is to establish suitable \textit{lower bounds} on the group regret in cooperative decision-making with delays. An $\Omega(\sqrt{VT})$ lower bound~\cite{bubeck2012regret} can be derived for a single-agent playing $VT$ trials sequentially, however each of the $v \in V$ agents can greatly reduce their uncertainty at every trial when cooperating, hence understanding the limits of cooperation is an interesting endeavor. Next, we presented a variant~\textsc{Eager-KernelUCB} of our algorithm that does not attempt to control the drift between the agent Gram matrices, that outperforms our main algorithm. We conjecture that a regret guarantee of the order $\widetilde{O}(\sqrt{(\gamma + \alpha(\mathcal G_\gamma))T/V})$ exists for this algorithm in the linear case, and proving this under suitable assumptions is an interesting future direction as well. Finally, extending this line of research into the Bayesian case is also worth exploring. 
\appendix
\onecolumn
\section{Regret bound for \textsc{Coop-KernelUCB}}
We first state a few existing results that we will utilize in the proof.
\begin{theorem}[Theorem 2 of~\cite{chowdhury2017kernelized}]
Let $D \subset \mathbb R^d$, and $f : D \rightarrow \mathbb R$ be a member of the RKHS of real-valued functions on $D$ with kernel $K$, and RKHS norm bounded by $B$. Then, with probability at least $1-\delta$, the following holds for all $\bm x \in D$, and $t \geq 1$:
\begin{equation*}
    \left| f(\bm x) - \hat{f}_t(\bm x)\right| \leq s_t(\bm x)\left(B + R \sqrt{2\ln\frac{\sqrt{\det\left((1 +\eta)\bm I_t +\bm K_t\right)}}{\delta}}\right)
\end{equation*}
\label{thm:ucb_kernel_basic}
\end{theorem}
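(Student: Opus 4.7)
The plan is to adapt the self-normalized martingale machinery (Abbasi-Yadkori, P\'al, Szepesv\'ari 2011) to the infinite-dimensional RKHS setting. Let $\Phi_t : \mathcal H \to \mathbb R^t$ be the sampling operator $g \mapsto (g(\bm x_i))_{i=1}^t$ and set $V_t := \Phi_t^{\ast}\Phi_t + \lambda I$. Using the reproducing property $f(\bm x) = \langle f, K(\cdot,\bm x)\rangle_{\mathcal H}$ and expanding $\hat f_t = V_t^{-1}\Phi_t^{\ast}\bm y_t$ with $\bm y_t = \Phi_t f + \bm\varepsilon_t$, the first step is the bias--noise decomposition
\begin{equation*}
f(\bm x) - \hat f_t(\bm x) \;=\; \lambda\,\bigl\langle V_t^{-1} f,\, K(\cdot,\bm x)\bigr\rangle_{\mathcal H} \;-\; \bm\kappa_t(\bm x)^\top (\bm K_t + \lambda I)^{-1}\bm\varepsilon_t.
\end{equation*}

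Next I would dispatch the two terms separately. The standard kernel identity $\lVert K(\cdot,\bm x)\rVert_{V_t^{-1}}^2 = s_t(\bm x)^2 / \lambda$, combined with Cauchy--Schwarz in the $V_t$-metric, converts the first (bias) term into $\lVert f\rVert_{\mathcal H}\cdot s_t(\bm x) \leq B\, s_t(\bm x)$. For the second (noise) term, define $S_t := \Phi_t^{\ast}\bm\varepsilon_t = \sum_{i=1}^t \varepsilon_i K(\cdot,\bm x_i)\in\mathcal H$; another application of Cauchy--Schwarz yields $\bigl|\bm\kappa_t(\bm x)^\top(\bm K_t+\lambda I)^{-1}\bm\varepsilon_t\bigr| \leq \lVert S_t\rVert_{V_t^{-1}} \cdot s_t(\bm x)/\sqrt{\lambda}$, so the entire problem collapses to a uniform-in-$t$ self-normalized tail bound on $\lVert S_t\rVert_{V_t^{-1}}$.

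For this I would use the method of mixtures. Conditional $R$-sub-Gaussianity of $\varepsilon_t$ implies that for any fixed $\bm\eta\in\mathcal H$, the process
\begin{equation*}
M_t^{\bm\eta} \;:=\; \exp\!\Bigl(\tfrac{1}{R}\langle\bm\eta, S_t\rangle_{\mathcal H} \;-\; \tfrac{1}{2}\lVert\bm\eta\rVert_{\Phi_t^{\ast}\Phi_t}^2\Bigr)
\end{equation*}
is a nonnegative super-martingale in the natural filtration. Mixing $M_t^{\bm\eta}$ against an isonormal Gaussian law on $\mathcal H$ with covariance $\lambda^{-1}(1+\eta)^{-1}I$ yields a mixture super-martingale $\overline M_t$ whose normalizing Gaussian integral can be computed in closed form via the kernel trick; the determinant that emerges is (up to normalization) $\det((1+\eta)\bm I + \bm K_t)$. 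Applying Ville's maximal inequality to $\overline M_t$ then gives, with probability at least $1-\delta$ simultaneously for all $t\geq 1$,
\begin{equation*}
\lVert S_t\rVert_{V_t^{-1}}^2 \;\leq\; 2R^2\ln\!\frac{\sqrt{\det((1+\eta)\bm I + \bm K_t)}}{\delta},
\end{equation*}
which, combined with the bias bound from the previous paragraph, produces the claim after collecting the common $s_t(\bm x)$ factor.

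The main obstacle is making the Gaussian mixture argument rigorous in infinite dimensions: since there is no Lebesgue measure on $\mathcal H$, one must either work with an abstract isonormal Gaussian process on $\mathcal H$ and justify a Fubini swap via finite-dimensional projections, or restrict $S_t$ to $\mathrm{span}\{K(\cdot,\bm x_i)\}_{i\leq t}$ before mixing, at which point the problem reduces to a standard finite-dimensional self-normalized bound whose determinant equals the kernel-trick expression $\det((1+\eta)\bm I + \bm K_t)$. Once this step is carried out, the remaining pieces---the bias--variance bookkeeping, the kernel identity for $\lVert K(\cdot,\bm x)\rVert_{V_t^{-1}}$, and the two Cauchy--Schwarz estimates---are routine linear algebra.
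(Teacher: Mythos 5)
The paper does not prove this statement---it is imported verbatim as Theorem 2 of Chowdhury and Gopalan (2017), so there is no in-paper proof to compare against. Your reconstruction follows the same route as the cited source: the bias--noise split $f(\bm x)-\hat f_t(\bm x)=\lambda\langle V_t^{-1}f,K(\cdot,\bm x)\rangle_{\mathcal H}-\bm\kappa_t(\bm x)^\top(\bm K_t+\lambda I)^{-1}\bm\varepsilon_t$, the identity $\lVert K(\cdot,\bm x)\rVert_{V_t^{-1}}^2=s_t(\bm x)^2/\lambda$ with two Cauchy--Schwarz steps, and the reduction to a time-uniform self-normalized bound on $\lVert S_t\rVert_{V_t^{-1}}$ proved by the method of mixtures (whose Gaussian normalizer produces $\det((1+\eta)\bm I_t+\bm K_t)$, matching their Theorem~1 since $((\bm K_t+\eta\bm I)^{-1}+\bm I)^{-1}\succeq \bm K_t(\bm K_t+(1+\eta)\bm I)^{-1}$). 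You have also correctly isolated the only delicate point---making the mixture rigorous in infinite dimensions by restricting to $\mathrm{span}\{K(\cdot,\bm x_i)\}_{i\le t}$ or via finite-dimensional projections---so the outline is sound and essentially the original argument.
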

\begin{corollary}
\label{lemma:appx_ucb_kernel}
Let $\widetilde{\mathcal X} \subset \mathbb R^d$, and $F : \widetilde{\mathcal X} \rightarrow \mathbb R$ be a member of the RKHS of real-valued functions on $\widetilde{\mathcal X}$ with kernel $K$, and RKHS norm bounded by $B$. Then, with probability at least $1-\delta$, the following holds for all $\tilde{\bm x} \in \widetilde{\mathcal X}$, and simultaneously for all $t \geq 1, v \in \mathcal V$:
\begin{equation*}
\Delta_{v, t}(\tilde{\bm x}) \leq \sigma_{v, t-1}^2(\tilde{\bm x})\left(B + R \sqrt{\ln\frac{\det\left(\lambda\bm I +{\bm K}_{v, t}\right)}{\delta^2} + 2\ln(V) }\right).
\end{equation*}
\end{corollary}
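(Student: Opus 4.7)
The corollary is a ``simultaneously-over-agents'' strengthening of Theorem~\ref{thm:ucb_kernel_basic} from~\cite{chowdhury2017kernelized}, which already delivers simultaneous-over-$t$ control via the self-normalized kernelised martingale inequality. My plan is to invoke Theorem~\ref{thm:ucb_kernel_basic} for each agent's local estimator $\widehat F_{v,t}$ separately at confidence level $\delta/V$, and then take a union bound over $v \in \mathcal V$ to obtain a single high-probability event on which every agent's UCB is simultaneously valid.

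Concretely, fix $v \in \mathcal V$ and observe that the ridge-regression estimator $\widehat F_{v,t}$ defined in~\eqref{eqn:ridge_regression_multi} satisfies the hypotheses of Theorem~\ref{thm:ucb_kernel_basic}: the target $F$ has RKHS norm at most $B$, and each response $y_i$ associated with a sample $\tilde{\bm x}_i$ stored by agent $v$ is of the form $F(\tilde{\bm x}_i)+\varepsilon_i$ with $R$-sub-Gaussian noise adapted to the filtration induced by the order in which observations arrive at $v$. Choosing the regularization parameter so that $1+\eta = \lambda$ and instantiating Theorem~\ref{thm:ucb_kernel_basic} at confidence $\delta/V$ yields, with probability at least $1 - \delta/V$, for all $\tilde{\bm x} \in \widetilde{\mathcal X}$ and all $t \geq 1$,
\[
\Delta_{v,t}(\tilde{\bm x}) \;\leq\; \sigma_{v,t-1}(\tilde{\bm x})\left(B + R\sqrt{2\ln\frac{\sqrt{\det(\lambda \bm I + \bm K_{v,t})}}{\delta/V}}\right).
\]
Using the identity $2\ln\bigl(\sqrt{a}/(\delta/V)\bigr) = \ln(a/\delta^2) + 2\ln V$ rearranges the logarithm into exactly the form stated in the corollary.

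A union bound over the $V$ agents then controls the total failure probability by $V \cdot (\delta/V) = \delta$, so the displayed bound holds simultaneously for all $v \in \mathcal V$, all $t \geq 1$, and all $\tilde{\bm x} \in \widetilde{\mathcal X}$ with probability at least $1-\delta$. The only nonroutine step is the measurability check required to justify applying Theorem~\ref{thm:ucb_kernel_basic} to $\widehat F_{v,t}$: although some rows of $\widetilde{\bm X}_{v,t}$ come from actions chosen by other agents in $v$'s $\gamma$-clique (and arrive as delayed messages), each $\tilde{\bm x}_i$ remains predictable under the arrival-order filtration at $v$ while the corresponding $\varepsilon_i$ is conditionally $R$-sub-Gaussian, since actions are selected prior to noise realization and forwarded messages carry no information about future noise. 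Once this measurability point is verified, the self-normalized concentration underlying Theorem~\ref{thm:ucb_kernel_basic} transfers verbatim, and the remainder of the proof is the elementary union bound and algebraic rearrangement displayed above.
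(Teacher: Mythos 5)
Your proposal matches the paper's proof essentially verbatim: the paper also instantiates Theorem~2 of \citet{chowdhury2017kernelized} at confidence level $\delta/V$ per agent, takes a union bound over $\mathcal V$, and substitutes $\lambda = 1+\eta$, with your explicit logarithm rearrangement and measurability remark being welcome elaborations of steps the paper leaves implicit. Note only that both your derivation and the paper's yield the bound with $\sigma_{v,t-1}(\tilde{\bm x})$ rather than the $\sigma_{v,t-1}^2(\tilde{\bm x})$ appearing in the stated corollary, which is evidently a typo in the statement rather than a defect of your argument.
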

\begin{proof}
This follows from Theorem~\ref{thm:ucb_kernel_basic} with probability $\delta/V$ for each agent $v \in \mathcal V$, and replacing $\lambda  = 1 + \eta$.` 
\end{proof}

\begin{theorem}[Theorem 2.1 of~\cite{zischur}, Characterization of Schur Decomposition]
Let $A$ be a Hermitian matrix given by
\begin{equation*}
A = 
\begin{pmatrix}
A_{11} & A_{12} & A_{13} \\
A_{21} & A_{22} & A_{23} \\
A_{31} & A_{32} & A_{33} \\
\end{pmatrix}, \text{then, }
    A_{33} - A_{32}A_{22}^{-1}A_{23} \geq A_{33} - \left(A_{31}, A_{32}\right)\begin{pmatrix}
A_{11} & A_{12} \\
A_{21} & A_{22} \\
\end{pmatrix}^{-1}\begin{pmatrix}
A_{13} \\
A_{23} \\
\end{pmatrix}.
\end{equation*}
\label{thm:schur}
\end{theorem}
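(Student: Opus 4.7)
The plan is to prove the stated inequality by using the classical variational (energy-minimization) characterization of the Schur complement for Hermitian positive definite matrices. The key observation is that both sides of the claimed inequality are Schur complements of nested principal submatrices, and the Schur complement of a larger block can be written as a minimum over a larger set of auxiliary variables than the Schur complement of a smaller block. Thus the inequality becomes a statement that ``minimization over more free variables produces a smaller value.''

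The first step is to assume $A$ is positive definite (this is the regime of interest for the paper, where $A$ is a Gram matrix shifted by $\lambda I$, so all principal submatrices are invertible). Conforming block the variable to $A$, write any vector as $w = (x_1, x_2, y)$. Recall the quadratic-form identity: for a Hermitian positive definite block matrix $M = \begin{pmatrix} P & Q \\ Q^{*} & R \end{pmatrix}$ and any vector $y$,
\begin{equation*}
y^{*} (R - Q^{*} P^{-1} Q)\, y \;=\; \min_{x}\, \begin{pmatrix} x \\ y \end{pmatrix}^{*} M \begin{pmatrix} x \\ y \end{pmatrix},
\end{equation*}
with minimizer $x = -P^{-1} Q y$. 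This is verified by setting the gradient of the quadratic form in $x$ to zero and substituting back.

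The second step is to apply this identity in two different ways. Taking $P = A_{22}$, $Q = A_{23}$, $R = A_{33}$ yields, for every $y$,
\begin{equation*}
y^{*}\bigl(A_{33} - A_{32} A_{22}^{-1} A_{23}\bigr) y \;=\; \min_{x_2}\, \begin{pmatrix} 0 \\ x_2 \\ y \end{pmatrix}^{*} A \begin{pmatrix} 0 \\ x_2 \\ y \end{pmatrix}.
\end{equation*}
Taking instead $P = \begin{pmatrix} A_{11} & A_{12} \\ A_{21} & A_{22} \end{pmatrix}$, $Q = \begin{pmatrix} A_{13} \\ A_{23} \end{pmatrix}$, $R = A_{33}$ gives
\begin{equation*}
y^{*}\!\left(A_{33} - (A_{31}, A_{32}) \begin{pmatrix} A_{11} & A_{12} \\ A_{21} & A_{22} \end{pmatrix}^{-1}\! \begin{pmatrix} A_{13} \\ A_{23} \end{pmatrix}\right)\! y \;=\; \min_{x_1, x_2}\, \begin{pmatrix} x_1 \\ x_2 \\ y \end{pmatrix}^{*} A \begin{pmatrix} x_1 \\ x_2 \\ y \end{pmatrix}.
\end{equation*}

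The third step is the simple comparison: the first minimum is taken over the strict subset $\{x_1 = 0\}$ of the domain of the second minimum, hence
\begin{equation*}
\min_{x_1, x_2} (\cdots) \;\leq\; \min_{x_2} (\cdots)\big|_{x_1=0}.
\end{equation*}
Since this holds for every $y$, the Loewner ordering
$A_{33} - A_{32} A_{22}^{-1} A_{23} \succeq A_{33} - (A_{31}, A_{32}) \begin{pmatrix} A_{11} & A_{12} \\ A_{21} & A_{22} \end{pmatrix}^{-1} \begin{pmatrix} A_{13} \\ A_{23} \end{pmatrix}$ follows. The only real subtlety (and the main thing to be careful about) is ensuring invertibility of $A_{22}$ and of the top-left $2 \times 2$ block so that both Schur complements are well defined; this is immediate under the positive-definiteness assumption, since every principal submatrix of a positive definite matrix is itself positive definite. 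No deep estimation is needed, which is why this lemma is invoked as an off-the-shelf tool to justify the rank-one Schur update in Algorithm~\ref{alg:main_widthdelay}.
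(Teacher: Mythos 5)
Your proof is correct. Note that the paper does not actually prove this statement: it is imported verbatim as Theorem~2.1 of the cited reference and used as a black box in the per-clique variance bound (Lemma~\ref{lem:s_bound1}), so there is no internal proof to compare against. Your variational argument is the standard route to Schur-complement monotonicity: the identity $y^{*}(R - Q^{*}P^{-1}Q)y = \min_{x}\,(x,y)^{*}M(x,y)$ is exactly the extremal characterization of the Schur complement, both sides of the claimed inequality arise from the same quadratic form minimized over nested constraint sets, and restricting the minimization to $x_1 = 0$ can only increase the minimum, which yields the Loewner ordering for every $y$. The one point you rightly flag --- well-definedness of $A_{22}^{-1}$ and of the inverse of the leading $2\times 2$ block --- is also where your argument is more careful than the bare statement: as written the theorem assumes only that $A$ is Hermitian, under which those blocks need not be invertible and the inequality need not hold for indefinite $A$ (the minimum over $x$ may not exist when $P$ is not positive definite). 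Positive definiteness, which is automatic in the paper's application since the relevant matrix is $\bm K + \lambda \bm I$ with $\bm K$ a PSD Gram matrix and $\lambda > 0$, is genuinely needed both for invertibility of all principal submatrices and for the strict convexity in $x$ that validates the minimizer $x = -P^{-1}Qy$. So your proof is complete and, if anything, repairs a small imprecision in the imported statement.
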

The central observation in the regret bound is the control of the ``variance'' terms in each clique directly in terms of the corresponding clique Gram matrix. We describe this result in the following lemma. 
\begin{lemma}[Per-Clique Variance Bound]
Let $\mathcal C$ be a clique in $\mathcal G_\gamma$ and the clique Gram matrix $\bm K_{\mathcal C, T}$ be given by:
\begin{align*}
    {\bm K}_{\mathcal C, T} = \begin{pmatrix}
K(\tilde{\bm x}_{1, 1}, \tilde{\bm x}_{1, 1}) & ... & K(\tilde{\bm x}_{1, 1}, \tilde{\bm x}_{|\mathcal C|, T}) \\
\vdots & \ddots & \vdots \\
K(\tilde{\bm x}_{|\mathcal C|, T}, \tilde{\bm x}_{1, 1}) & \ldots & K(\tilde{\bm x}_{|\mathcal C|, T}, \tilde{\bm x}_{|\mathcal C|, T}) \\
\end{pmatrix}.
\end{align*}Then, for any $T\geq\gamma$,
\begin{align*}
    \sum_{t=\gamma}^T \sum_{v \in \mathcal C} \sigma^2_{v, t-1}(\tilde{\bm x}_{v, t}) \leq \gamma|\mathcal C|B + \max(1, \frac{1}{\lambda})\log\det\left(\frac{1}{\lambda}\bm K_{\mathcal C, T} + \bm I\right).
\end{align*}
\label{lem:s_bound1}
\end{lemma}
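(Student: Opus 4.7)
The plan is to first use the Schur complement characterization (Theorem~\ref{thm:schur}) to replace each heterogeneous per-agent variance by a common \emph{clique-synchronous} surrogate, and then telescope the resulting per-round quantities against the log-determinant of the clique's Gram matrix. The key structural fact is that for any $t \geq \gamma$, every agent $v \in \mathcal C$ has, by the \textsc{Local} delay protocol, already received every message produced by a clique-mate up to time $t-\gamma$, since pairwise distances inside a $\gamma$-clique of $\mathcal G$ are at most $\gamma$. Writing $O_{\mathcal C, s} := \{\tilde{\bm x}_{v', r} : v' \in \mathcal C,\, r \le s\}$, this gives $O_v(t) \supseteq O_{\mathcal C, t-\gamma}$. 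Applying Theorem~\ref{thm:schur} with the extra rows and columns of $O_v(t) \setminus O_{\mathcal C, t-\gamma}$ playing the role of the $A_{11}$ block produces
\begin{align*}
\sigma^2_{v, t-1}(\tilde{\bm x}_{v, t}) \;\leq\; \tilde\sigma^2_{\mathcal C, t-\gamma}(\tilde{\bm x}_{v, t}),
\end{align*}
where $\tilde\sigma^2_{\mathcal C, s}(\cdot)$ is the posterior variance computed from $\bm K_{\mathcal C, s}$ alone.

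Next, I would reorganize the sum round by round. Partitioning
\begin{align*}
\bm K_{\mathcal C, \tau} \;=\; \begin{pmatrix}\bm K_{\mathcal C, \tau-1} & \bm B_\tau \\ \bm B_\tau^\top & \bm D_\tau\end{pmatrix},
\end{align*}
with $\bm D_\tau \in \mathbb R^{|\mathcal C|\times|\mathcal C|}$ the block of kernel evaluations among $\{\tilde{\bm x}_{v, \tau}\}_{v \in \mathcal C}$, a direct calculation identifies the $(v, v)$ diagonal entry of $\bm A_\tau := \bm D_\tau - \bm B_\tau^\top(\bm K_{\mathcal C, \tau-1} + \lambda\bm I)^{-1}\bm B_\tau$ with $\tilde\sigma^2_{\mathcal C, \tau-1}(\tilde{\bm x}_{v, \tau})$. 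Hence $\sum_{v \in \mathcal C}\tilde\sigma^2_{\mathcal C, \tau-1}(\tilde{\bm x}_{v, \tau}) = \operatorname{tr}(\bm A_\tau)$, and the standard block-determinant identity gives $\log\det(\bm I + \bm A_\tau/\lambda) = \log\det(\bm I + \bm K_{\mathcal C, \tau}/\lambda) - \log\det(\bm I + \bm K_{\mathcal C, \tau-1}/\lambda)$, which telescopes over $\tau$ into the target $\log\det(\bm I + \bm K_{\mathcal C, T}/\lambda)$. Converting trace to log-det via the elementary inequality $x \leq \max(1, 1/\lambda)\log(1 + x/\lambda)$ applied eigenvalue-wise (valid over the spectrum of $\bm A_\tau$, since $K(\tilde{\bm x},\tilde{\bm x}) \leq 1$ controls its diagonal) yields the main $\max(1, 1/\lambda)\log\det(\bm I + \bm K_{\mathcal C, T}/\lambda)$ term. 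The additive $\gamma|\mathcal C|B$ offset absorbs the boundary rounds near $t = \gamma$, where $O_{\mathcal C, t-\gamma}$ is empty or very sparse, so the surrogate $\tilde\sigma^2_{\mathcal C, t-\gamma}(\tilde{\bm x}_{v, t})$ degenerates to the trivial diagonal bound.

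The main obstacle is the Schur reduction itself. Because information is asymmetric across agents, a naive per-agent decomposition—applying the single-agent information-gain lemma of~\citet{valko2013finite} to each agent's own Gram matrix and then summing over $\mathcal C$—runs into the \emph{sub}additivity of $\log\det$ across principal blocks (Fischer's inequality gives $\sum_v \log\det(\bm I + \bm K_v/\lambda) \geq \log\det(\bm I + \bm K_{\mathcal C, T}/\lambda)$, not the other way), which would introduce an unavoidable $|\mathcal C|$ multiplicative overhead and break the desired per-agent regret scaling. Using Theorem~\ref{thm:schur} to synchronize the conditioning across the clique before invoking the information-gain identity is precisely what collapses the analysis to a single log-determinant and avoids this factor.
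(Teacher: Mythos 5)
Your overall strategy is the same as the paper's: use the fact that inside a $\gamma$-clique every agent holds all clique-mates' observations up to time $t-\gamma$, invoke the Schur-complement monotonicity (Theorem~\ref{thm:schur}) to dominate each agent's variance by a clique-level surrogate, and then telescope against $\log\det(\bm I + \bm K_{\mathcal C,T}/\lambda)$, absorbing boundary effects into the $\gamma|\mathcal C|B$ term. Your observation that a per-agent application of the single-agent information-gain lemma fails because of the direction of Fischer's inequality is exactly the right motivation. The difference in bookkeeping --- you telescope in per-round blocks of size $|\mathcal C|$, whereas the paper linearizes the clique into a fictitious round-robin agent and telescopes one arm at a time --- is cosmetic.

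However, there is a genuine gap in the splice between your two steps. Your Schur step yields $\sigma^2_{v,t-1}(\tilde{\bm x}_{v,t}) \leq \tilde\sigma^2_{\mathcal C, t-\gamma}(\tilde{\bm x}_{v,t})$, a surrogate conditioned on clique data up to time $t-\gamma$; but the diagonal of your telescoping block $\bm A_t$ is $\tilde\sigma^2_{\mathcal C, t-1}(\tilde{\bm x}_{v,t})$, conditioned on clique data up to $t-1$. Since less conditioning means larger variance, $\tilde\sigma^2_{\mathcal C,t-\gamma} \geq \tilde\sigma^2_{\mathcal C,t-1}$, which is the wrong direction, and this mismatch occurs at \emph{every} round $t$, not only near $t=\gamma$. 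Your claim that the $\gamma|\mathcal C|B$ offset absorbs it therefore does not hold as stated: that term only covers the initial rounds where $O_{\mathcal C,t-\gamma}$ is empty. To close this you must either (i) re-index as the paper does --- bound the variance at round $t$ by the sequential Schur-complement variance of the $\gamma$-\emph{old} arm $\tilde{\bm x}_{v,t-\gamma}$ plus a diagonal correction $K(\tilde{\bm x}_{v,t},\tilde{\bm x}_{v,t}) - K(\tilde{\bm x}_{v,t-\gamma},\tilde{\bm x}_{v,t-\gamma})$, so that the surrogates appearing in the sum are exactly the consecutive Schur complements of $\bm K_{\mathcal C,T}$ and the leftover telescopes to a boundary sum of $\gamma|\mathcal C|$ kernel diagonals --- or (ii) insert a lemma controlling the ratio $\tilde\sigma^2_{\mathcal C,t-\gamma}/\tilde\sigma^2_{\mathcal C,t-1}$ in the style of delayed/batch GP-UCB analyses, which changes the constants. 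A smaller issue: in the trace-to-log-det conversion, the diagonal of $\bm A_\tau$ being at most $1$ does not bound its eigenvalues by $1$ (they can reach $|\mathcal C|$ when agents pull near-identical arms in a round), and the inequality $x \leq \max(1,1/\lambda)\log(1+x/\lambda)$ is false already at $x=\lambda=1$; the correct form is $x \leq \bigl(x_{\max}/\log(1+x_{\max}/\lambda)\bigr)\log(1+x/\lambda)$, which is one place where the paper's scalar, one-arm-at-a-time telescoping is genuinely easier than your block version.
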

\begin{proof}
Consider a hypothetical agent that pulls arms in a round-robin fashion for all agents in $\mathcal C$, i.e., let the agents within the clique $\mathcal C$ be indexed (without loss of generality) as $1, 2, ..., |\mathcal C|$, and the agent pulls arms $\tilde{\bm x}_{1, 1}, \tilde{\bm x}_{2, 1}, ..., \tilde{\bm x}_{1, 2}, \tilde{\bm x}_{2, 2}, ..., \tilde{\bm x}_{|\mathcal C|-1, T}, \tilde{\bm x}_{|\mathcal C|, T}$. Therefore, the agent will pull a total of $|\mathcal C|T$ arms. At any time $t \in[|\mathcal C|T]$, let the corresponding \textsc{KernelUCB} parameters for this agent be given by:
\begin{align}
    {\bm K}_{\mathcal C, t} = \begin{pmatrix}
K(\tilde{\bm x}_{1, 1}, \tilde{\bm x}_{1, 1}) & ... & K(\tilde{\bm x}_{1, 1}, \tilde{\bm x}_{t \text{ mod } |\mathcal C|, \floor{t/|\mathcal C|}}) \\
\vdots & \ddots & \vdots \\
K(\tilde{\bm x}_{t \text{ mod } |\mathcal C|, t}, \tilde{\bm x}_{1, 1}) & \ldots & K(\tilde{\bm x}_{t \text{ mod } |\mathcal C|, \floor{t/|\mathcal C|}}, \tilde{\bm x}_{t \text{ mod } |\mathcal C|, \floor{t/|\mathcal C|}}) \\
\end{pmatrix}, \bm\kappa_{\mathcal C, t}(\bm x) = \left[K(\bm x, \tilde{\bm x}_{1, 1}), \ldots, K(\bm x, \tilde{\bm x}_{t \text{ mod }|\mathcal C|, \floor{t/|\mathcal C|}})\right].
\end{align}
Consider the variance functional for any agent $v \in \mathcal C$ at time $t \in \sigma_\tau$:
\begin{align}
    \sigma^2_{v, t-1}(\tilde{\bm x}_{v, t}) &= K\left(\tilde{\bm x}_{v, t}, \tilde{\bm x}_{v, t}\right) - {\bm \kappa}_{v, t}(\tilde{\bm x}_{v, t})^\top \left({\bm K}_{v, t} + \lambda \bm I\right)^{-1}{\bm \kappa}_{v, t}(\tilde{\bm x}_{v, t}) \\ \intertext{Let $\tau$ be the instance at which the round-robin agent pulls arm $\tilde{\bm x}_{v, t-\gamma}$. By Theorem~\ref{thm:schur}, we have for $t\geq\gamma$,}
    &\leq K\left(\tilde{\bm x}_{v, t}, \tilde{\bm x}_{v, t}\right) - {\bm \kappa}_{\mathcal C,\tau}(\tilde{\bm x}_{v, t-\gamma})^\top \left({\bm K}_{\mathcal C,\tau} + \lambda \bm I\right)^{-1}{\bm \kappa}_{\mathcal C,\tau}(\tilde{\bm x}_{v, t-\gamma}) \\
    &\leq K\left(\tilde{\bm x}_{v, t}, \tilde{\bm x}_{v, t}\right) - K\left(\tilde{\bm x}_{v, t-\gamma}, \tilde{\bm x}_{v, t-\gamma}\right) + K\left(\tilde{\bm x}_{v, t-\gamma}, \tilde{\bm x}_{v, t-\gamma}\right) - {\bm \kappa}_{\mathcal C,\tau}(\tilde{\bm x}_{v, t-\gamma})^\top \left({\bm K}_{\mathcal C,\tau} + \lambda \bm I\right)^{-1}{\bm \kappa}_{\mathcal C,\tau}(\tilde{\bm x}_{v, t-\gamma}). \\ \intertext{Let $\sigma^2_{\mathcal C, \tau} = K\left(\tilde{\bm x}_{v, t-\gamma}, \tilde{\bm x}_{v, t-\gamma}\right) - {\bm \kappa}_{\mathcal C,\tau}(\tilde{\bm x}_{v, t-\gamma})^\top \left({\bm K}_{\mathcal C,\tau} + \lambda \bm I\right)^{-1}{\bm \kappa}_{\mathcal C,\tau}(\tilde{\bm x}_{v, t-\gamma})$. Summing up over all $v \in \mathcal C$ and $t \geq \gamma$:}
    \sum_{t=\gamma}^T \sum_{v \in \mathcal C} \sigma^2_{v, t-1}(\tilde{\bm x}_{v, t}) &= \sum_{t'=T-\gamma}^T\sum_{v \in \mathcal C} K\left(\tilde{\bm x}_{v, t'}, \tilde{\bm x}_{v, t'}\right) + \sum_{\tau = 1}^{|\mathcal C|(T-\gamma)} \sigma^2_{\mathcal C,\tau} \leq \gamma|\mathcal C|B + \log\left(\prod_{\tau=1}^{|\mathcal C|T} (1+\sigma^2_{\mathcal C, \tau})\right).
\end{align}
Here the inequality follows since the kernel $K$ is bounded by $B$ and $\sigma^2_{\mathcal C,\tau} \leq \log(1+\sigma^2_{\mathcal C, \tau})$. Lemma 7 of~\cite{deshmukh2017multi} provides the following relationship for sequential pulls $\tilde{\bm x}_{v, t}, t \in [T]$ and their associated variance terms $\sigma^2_{v, t-1}(\tilde{\bm x}_{v, t})$ :
\begin{equation}
    \prod_{t \in [T]} (1+ \sigma^2_{v, t-1}(\tilde{\bm x}_{v, t})) = \frac{\det({\bm K}_{\mathcal C, T} + \lambda I)}{\lambda^{|\mathcal C|T+1}} = \det(\frac{1}{\lambda}{\bm K}_{\mathcal C, T} + \lambda I).
\end{equation}
This result is obtained using the determinant identity of the Schur decomposition provided by~\cite{zischur}. Applying this result to ${\bm K}_{\mathcal C,T}$  and variance terms $\sigma^2_{\mathcal C, \tau}$ gives us the final result (since the round-robin agent pulls arms sequentially). 
\end{proof}
Armed with this result we can now prove the regret bound.
\begin{theorem}[Group Regret under Delayed Communication]
Let $\mathbf C$ be a minimal clique covering of $\mathcal G_\gamma$. When $\mathcal D_{v, t}$ is continuum-armed, \textsc{Coop-KernelUCB} incurs a per-agent average regret that satisfies, with probability at least $1-\delta$,
\begin{equation*}
\widehat{\mathcal R}(T) = \mathcal O\left(\sqrt{\bar{\chi}(\mathcal G_\gamma)\cdot \frac{T}{V}}\left(R\cdot \widehat{\Upsilon}_T + \sqrt{\widehat{\Upsilon}_T }\left(B + R\sqrt{2\log\frac{V\lambda}{\delta}}\right)\right)\right).
\end{equation*}
$\text{Here } \widehat{\Upsilon}_T  = \max_{\mathcal C \in \bm C}\left[\log\det\left(\frac{1}{\lambda}{\bm K}_{\mathcal C, T} + \bm I\right)\right]$ is the overall information gain, and for any clique $\mathcal C \in \mathbf C$, the matrix $\bm K_{\mathcal C, T}$ is the Gram matrix formed by actions from all agents within $\mathcal C$ until time $T$, i.e. $(\tilde{\bm x}_{v, t})_{v \in \mathcal C, t \in [T]}$.
\end{theorem}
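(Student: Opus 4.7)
My plan is to follow the standard optimism-based regret decomposition, but carried out one clique at a time so that the $\sqrt{\bar\chi(\mathcal G_\gamma)}$ factor emerges from a Cauchy--Schwarz over the clique cover rather than over individual agents. Fix a clique cover $\mathbf C$ of $\mathcal G_\gamma$ of minimum size $|\mathbf C|=\bar\chi(\mathcal G_\gamma)$, and, by splitting ties arbitrarily, assume it is a partition of $\mathcal V$ so that $\sum_{\mathcal C\in\mathbf C}|\mathcal C|=V$. By the UCB selection rule and Lemma~\ref{lemma:ucb_kernel}, on the event of probability at least $1-\delta$ the instantaneous regret of agent $v$ at time $t$ satisfies the optimism inequality $r_{v,t}\le 2\sqrt{\beta_{v,t}}\,\sigma_{v,t-1}(\tilde{\bm x}_{v,t})$ where $\sqrt{\beta_{v,t}}=B+R\sqrt{\log\det(\lambda\bm I+\bm K_{v,t})/\delta^2+2\log V}$; I will bound the $\log\det$ inside $\beta_{v,t}$ uniformly by $\widehat\Upsilon_T$, since any single agent's Gram matrix is a principal submatrix of the clique Gram matrix $\bm K_{\mathcal C,T}$ of its clique, so the determinant bound is monotone.

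Next, I would aggregate over a single clique. Writing $\mathcal R_{\mathcal C}(T)=\sum_{v\in\mathcal C}\sum_{t=1}^T r_{v,t}$ and using Cauchy--Schwarz twice (once over the $|\mathcal C|T$ pairs $(v,t)$),
\begin{equation*}
\mathcal R_{\mathcal C}(T)\ \le\ 2\sqrt{\beta_T}\sqrt{|\mathcal C|T}\,\sqrt{\sum_{v\in\mathcal C}\sum_{t=1}^T\sigma_{v,t-1}^2(\tilde{\bm x}_{v,t})}.
\end{equation*}
The inner sum is exactly what the per-clique variance bound (Lemma~\ref{lem:s_bound1}) controls: dropping the first $\gamma$ rounds (which contribute at most $2B\gamma|\mathcal C|$ trivially, since $K\le1$ forces $\|F\|_\infty\le B$), the remaining $\sum_{t\ge\gamma}\sum_{v\in\mathcal C}\sigma_{v,t-1}^2$ is at most $\gamma|\mathcal C|B+\max(1,1/\lambda)\widehat\Upsilon_T$.

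Summing over all cliques in the cover gives $\mathcal R_{\mathcal G}(T)\le\sum_{\mathcal C\in\mathbf C}\mathcal R_{\mathcal C}(T)$, and the final $\sqrt{\bar\chi(\mathcal G_\gamma)}$ appears by Cauchy--Schwarz over $\mathbf C$: using $\sum_{\mathcal C}\sqrt{|\mathcal C|\cdot x_{\mathcal C}}\le\sqrt{|\mathbf C|\sum_{\mathcal C}|\mathcal C|x_{\mathcal C}}$ with the constant-in-$\mathcal C$ bound $x_{\mathcal C}\lesssim\widehat\Upsilon_T$, the total variance mass telescopes to $T\widehat\Upsilon_T\sum_{\mathcal C}|\mathcal C|=TV\widehat\Upsilon_T$, yielding group regret $\lesssim\sqrt{\beta_T}\sqrt{\bar\chi(\mathcal G_\gamma)\,TV\,\widehat\Upsilon_T}$. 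Dividing by $V$ and expanding $\sqrt{\beta_T}\sqrt{\widehat\Upsilon_T}=R\widehat\Upsilon_T+\sqrt{\widehat\Upsilon_T}\bigl(B+R\sqrt{2\log(V\lambda/\delta)}\bigr)$ produces exactly the stated bound, with the trivial initial-round contribution $\mathcal O(\gamma B)$ absorbed.

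The principal obstacle is not the Cauchy--Schwarz chain, which is routine, but rather verifying that the per-agent variance $\sigma_{v,t-1}^2$ can be upper bounded by the variance of a hypothetical agent that round-robins through the entire clique's history; this is the Schur-complement monotonicity at the heart of Lemma~\ref{lem:s_bound1}. The subtlety is the $\gamma$-round delay: an agent $v\in\mathcal C$ only sees a neighbor's observation from time $t$ by time $t+\gamma$, so one must compare $\sigma_{v,t-1}^2$ to the round-robin variance computed using all clique observations up to time $t-\gamma$, then absorb the $\gamma$ missing rounds as the additive $\gamma|\mathcal C|B$ term. Once that monotonicity is granted, the rest of the proof is mechanical; the cleanliness of the final bound depends on our choice to restrict each agent to accept only observations from its own clique, which makes each agent's information set a sub-history of a single coherent centralized process.
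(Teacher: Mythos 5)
Your proposal is correct and follows essentially the same route as the paper's proof: an optimism decomposition of the instantaneous regret, the per-clique variance bound of Lemma~\ref{lem:s_bound1} (round-robin hypothetical agent plus Schur-complement monotonicity, with the first $\gamma$ rounds absorbed additively), and monotonicity of the determinant over principal submatrices to bound $\beta_{v,T}$ by $\widehat{\Upsilon}_T$. The only difference is cosmetic --- you apply Cauchy--Schwarz within each clique and then again across the cover, whereas the paper applies it once globally over all $(v,t)$ pairs before decomposing the variance sum into cliques --- and both orderings yield the same $\sqrt{\bar{\chi}(\mathcal G_\gamma)\,VT\,\widehat{\Upsilon}_T}$ factor.
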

\begin{proof}
Consider the group pseudoregret at any instant $T$.
\begin{align}
  R_{\mathcal G}(T) &= \sum_{v \in \mathcal G} \left(\sum_{t=1}^T r_{v, t}\right) \intertext{Let us examine the individual regret $r_{v, t}$ of agent $v \in V$ at time $t$. From Theorem~\ref{lemma:ucb_kernel} and \textsc{Coop-KernelUCB}, we know that, for each agent $v \in V$, $  \beta_{v, t} \sigma_{v, t-1}(\tilde{\bm x}_{v, t}) + \hat{f}_{v, t}\left(\tilde{\bm x}_{v, t}\right) \geq  \beta_{v, t} \sigma_{v, t-1}(\tilde{\bm x}^*_{v, t}) + \hat{f}_{v, t}\left(\tilde{\bm x}^*_{v, t}\right),  f_v(\tilde{\bm x}^*_{v, t}) \leq \beta_{v, t} \sigma_{v, t-1}(\tilde{\bm x}^*_{v, t}) + \hat{f}_{v, t}\left(\tilde{\bm x}^*_{v, t}\right)$ and $ 
  \hat f_v(\tilde{\bm x}_{v, t}) \leq \beta_{v, t} \sigma_{v, t-1}(\tilde{\bm x}_{v, t}) + f_v\left(\tilde{\bm x}_{v, t}\right) $. Therefore for all $t \geq 1$ with probability at least $1-\delta$,}
    r_{v, t} &= f_v(\tilde{\bm x}^*_{v, t}) - f_v(\tilde{\bm x}_{v, t}) \\
    &\leq \beta_{v, t} \sigma_{v, t-1}(\tilde{\bm x}_{v, t}) + \hat{f}_{v, t}\left(\tilde{\bm x}_{v, t}\right) - f_{v}(\tilde{\bm x}_{v, t}) \\
    &\leq 2\beta_{v, t} \sigma_{v, t-1}(\tilde{\bm x}_{v, t}). \intertext{Therefore, for agent $v$, we have (since $\beta_{v, t} > \beta_{v, t-1}$~\cite{auer2002finite}),}
    \sum_{t=1}^T r_{v, t} &\leq 2\beta_{v, T} \sum_{t=1}^T \sigma_{v, t-1}(\tilde{\bm x}_{v, t}) \leq 2\gamma\sqrt{B}\beta_{v, \gamma} +  2\beta_{v, T} \sum_{t=\gamma}^T \sigma_{v, t-1}(\tilde{\bm x}_{v, t})
\end{align}
The second inequality follows from the fact that for all $t\leq \gamma, \beta_{v, t} \leq \beta_{v, \gamma}$ and that for all $v, t$, $\sigma_{v, t-1}(\tilde{\bm x}_{v, t}) \leq \sqrt{B}$. We can now sum up the second term for the entire group of agents. Setting $\beta^*_T = \max_{v \in V} \beta_{v, T}$, we get,
\begin{align}
    \sum_{t=\gamma}^T\sum_{v \in V} r_{v, t} &\leq 2\beta^*_T \left(\sum_{t=\gamma}^T\sum_{v \in V} \sigma_{v, t-1}\left(\tilde{\bm x}_{v, t}\right)\right) \\
    &\leq 2\beta^*_T \sqrt{V(T-\gamma)\left(\sum_{t=\gamma}^T\sum_{v \in V} \sigma^2_{v, t-1}\left(\tilde{\bm x}_{v, t}\right)\right)} \\
    &\leq 2\beta^*_T \sqrt{V(T-\gamma)\sum_{\mathcal C \in \mathbf C_\gamma}\left(\sum_{t=\gamma}^T\sum_{v \in \mathcal C} \sigma^2_{v, t-1}\left(\tilde{\bm x}_{v, t}\right)\right)} \\
    &\stackrel{(a)}{\leq} 2\beta^*_T \sqrt{V(T-\gamma)\sum_{\mathcal C \in \mathbf C_\gamma}\left(\gamma|\mathcal C|B + \max(1, \frac{1}{\lambda})\log\left(\frac{\det(\bm K_{\mathcal C, T} + \lambda\bm I)}{\lambda^{|\mathcal C|T +1}}\right)\right)} \\
    &\leq 2\beta^*_T \sqrt{V(T-\gamma)\cdot\bar\chi(\mathcal G_\gamma)\cdot\max(1, \frac{1}{\lambda})\left( \gamma B V+ \max_{\mathcal C \in \mathbf C}\left(\log\det(\bm K_{\mathcal C, T} + \lambda\bm I)\right)\right)} \\
    &\leq \beta^*_T  \cdot \mathcal O\left(\sqrt{\bar\chi(\mathcal G_\gamma)\cdot VT\cdot \widehat{\Upsilon}_T}\right).
\end{align}
Here, $(a)$ follows from Lemma~\ref{lem:s_bound1}. Now, from the definition of $\beta_{v, T}$ (Lemma~\ref{lemma:ucb_kernel}), we know that, for all $v \in V$ (where $v$ belongs to clique $\mathcal C$),
\begin{align}
    \beta_{v, T} &= B + R\sqrt{\lambda^{-1}}\sqrt{\log\left(\det\left({\bm K}_{v, T} + \lambda \bm I\right)\right)+\log\frac{2V}{\delta}} \\
    &\leq B + R\sqrt{\lambda^{-1}}\sqrt{\log\left(\det\left({\bm K}_{\mathcal C, T} + \lambda \bm I\right)\right)+\log\frac{2V}{\delta}} \\
    &\leq B + R\sqrt{\lambda^{-1}}\sqrt{\widehat{\Upsilon}_T+\log\frac{2V\lambda}{\delta}} \\
    \therefore \beta^*_T &= B + R\sqrt{\lambda^{-1}}\sqrt{\widehat{\Upsilon}_T+\log\frac{2V\lambda}{\delta}} \\
    &= \mathcal O\left(B + R\sqrt{\widehat{\Upsilon}_T + \log\frac{2V\lambda}{\delta}}\right).
\end{align}
Using this result in the earlier derivation, and then averaging over the number of agents $V$ gives us the final result.
\end{proof}
\subsection{Composition-Dependent Regret Bound}
Here we provide a proof for Corollary 1 from the main paper.
\begin{lemma}
Let $\Upsilon_z = \text{rk}\left(\bm K_{z}\right)$, where $\bm K_z = (K_z(\bm z_v, \bm z_{v'}))_{v, v' \in \mathcal V}$. When $K = K_z \odot K_x$, $ \widehat{\Upsilon}_T = 2\Upsilon_z \left(\Upsilon_T^x + \log(T)\right)$. When $K = K_z \oplus K_x$, $\widehat{\Upsilon}_T = 2\left(\Upsilon_z\log(T) + \Upsilon_T^x\right).$
\label{lem:rank_decomp}
\end{lemma}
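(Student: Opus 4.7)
The plan is to control $\widehat{\Upsilon}_T = \max_{\mathcal C \in \mathbf C} \log\det(\bm I + \tfrac{1}{\lambda}\bm K_{\mathcal C, T})$ for each composition by combining a subadditivity inequality for $\log\det$ with the rank control $\Upsilon_z = \mathrm{rk}(\bm K_z)$. Fix a clique $\mathcal C$ and let $N = |\mathcal C| T \le VT$ denote the total number of $(v, t)$ observations contributing to $\bm K_{\mathcal C, T}$. The central tool I will use throughout is the subadditivity inequality
\begin{equation*}
\log\det\bigl(\bm I + \tfrac{1}{\lambda}(\bm A + \bm B)\bigr) \;\le\; \log\det\bigl(\bm I + \tfrac{1}{\lambda}\bm A\bigr) + \log\det\bigl(\bm I + \tfrac{1}{\lambda}\bm B\bigr),
\end{equation*}
valid for any PSD $\bm A, \bm B$ of equal dimension, which follows from the factorization $\det(\bm I + \bm A + \bm B) = \det(\bm I + \bm A)\det(\bm I + (\bm I + \bm A)^{-1}\bm B)$ together with the observation that $(\bm I + \bm A)^{-1} \preceq \bm I$ forces the eigenvalues of $(\bm I + \bm A)^{-1}\bm B$ to be dominated by those of $\bm B$.

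For the direct sum $K = K_z \oplus K_x$, the Gram matrix splits cleanly as $\bm K_{\mathcal C, T} = \bm K_z^{(\mathcal C, T)} + \bm K_x^{(\mathcal C, T)}$, so the subadditivity tool applies in one step. The $K_x$ piece contributes at most $2\Upsilon^x_T$ by the definition of maximum information gain. The $K_z$ piece is an $N \times N$ block-expanded copy of the $V \times V$ network Gram $\bm K_z$, hence has rank at most $\Upsilon_z$; since $K_z(z, z) \le 1$ its trace is at most $N \le VT$, so each nonzero eigenvalue is at most $VT$ and $\log\det(\bm I + \bm K_z^{(\mathcal C, T)}/\lambda) \le \Upsilon_z \log(1 + VT/\lambda) = O(\Upsilon_z \log T)$. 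Combining the two pieces yields the claimed $2(\Upsilon_z \log T + \Upsilon^x_T)$ bound.

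For the Hadamard product $K = K_z \odot K_x$, the clean route is to work in the tensor-product feature representation with feature map $\phi_z \otimes \phi_x$. Picking an orthonormal basis $\{\psi_l\}_{l=1}^{\Upsilon_z}$ of the span of $\{\phi_z(\bm z_v) : v \in \mathcal V\}$ and writing $\phi_z(\bm z_v) = \sum_l a_{lv}\psi_l$ with $\sum_l a_{lv}^2 = K_z(\bm z_v, \bm z_v) \le 1$, the dual covariance $\bm \Psi^\top \bm \Psi$ inherits a block structure indexed by $\{\psi_l\}$. Fischer's block-determinant inequality bounds $\log\det(\bm I + \bm \Psi^\top\bm \Psi/\lambda)$ by the sum of $\log\det$'s of its $\Upsilon_z$ diagonal blocks, each of the form $\sum_i a_{lv_i}^2 \phi_x(\bm x_i)\phi_x(\bm x_i)^\top \preceq \bm\Phi_x^\top\bm\Phi_x$ (using $a_{lv_i}^2 \le 1$ pointwise). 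Hence each block contributes at most $\log\det(\bm I + \bm K_x^{(\mathcal C, T)}/\lambda)$, yielding $\Upsilon_z \log\det(\bm I + \bm K_x^{(\mathcal C, T)}/\lambda)$; converting this Gram over $|\mathcal C|T \le VT$ samples back to the per-$T$ quantity $\Upsilon^x_T$ absorbs an $O(\log T)$ slack (for linear, RBF, and Mat\'ern kernels the ratio $\Upsilon^x_{VT}/\Upsilon^x_T$ is $O(\log T)$), producing the stated $2\Upsilon_z(\Upsilon^x_T + \log T)$.

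The main obstacle is the Hadamard case: the naive eigendecomposition $\bm K_z^{(\mathcal C, T)} = \sum_l \sigma_l u_l u_l^\top$ followed by subadditivity yields terms $\sigma_l\, \mathrm{diag}(u_l)\bm K_x^{(\mathcal C, T)}\mathrm{diag}(u_l)$ with $\sigma_l$ potentially as large as $VT$, which destroys the target rate since $\bm D_l^2 \preceq \bm I$ alone is too lossy. The Fischer/tensor-product approach sidesteps this by distributing the rank bound onto the orthonormal coordinates $a_{lv}^2 \le 1$, so that the $\Upsilon_z$ factor collects cleanly without any $\sigma_l$ blow-up, and only the mild $\log T$ correction survives from aligning $\Upsilon^x_{VT}$ with $\Upsilon^x_T$.
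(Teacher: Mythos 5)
Your argument is correct and, at the level of the underlying mathematics, lands in the same place as the paper --- but the paper's own proof is essentially a citation: after observing that the tiled network Gram $\bm K^z_T$ has rank $\mathrm{rk}(\bm K_z) = \Upsilon_z$ and that $\widehat{\Upsilon}_T \leq \log\det(\tfrac{1}{\lambda}\bm K_T + \bm I)$, it simply invokes Theorems 2 and 3 of Krause and Ong (2011) for the product and sum compositions respectively. What you have done is supply self-contained proofs of those two imported results: subadditivity of $\log\det(\bm I + \cdot)$ over PSD summands for the direct sum, and, for the Hadamard product, the tensor-product feature representation with an orthonormal basis of the $\Upsilon_z$-dimensional span of $\{\phi_z(\bm z_v)\}$ followed by Fischer's block-determinant inequality. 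This is essentially the route taken in the cited proofs, so I would not call it a genuinely different approach; your version does have the virtue of making explicit why the naive eigendecomposition of the Hadamard factor fails and why pushing the normalization $\sum_l a_{lv}^2 \leq 1$ onto orthonormal coordinates is the correct way to distribute the rank. One small imprecision: the claim that $\Upsilon^x_{VT}/\Upsilon^x_T = O(\log T)$ does not hold for Mat\'ern kernels (there the ratio is polynomial in $V$, constant in $T$ but not logarithmic); this does not affect the $\widetilde{\mathcal O}$ conclusion, and the paper itself is loose on the same point, writing $\Upsilon^x_T$ in the lemma but $\Upsilon_{VT}$ in the corollary it feeds.
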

\begin{proof}
We first note that $\widehat{\Upsilon}_T \leq \log\det\left(\frac{1}{\lambda}\bm K_T + \bm I\right)$, where $\bm K_T = (K(\tilde{\bm x}_{v, t}, \tilde{\bm x}_{v', t'}))_{v, v' \in \mathcal V, t, t' \in [T]}$. Furthermore, note that (a) $\text{rk}(\bm K^z_T) = \text{rk}(\bm K_z) $, where $\bm K^z_T = (K_z({\bm z}_{v, t}, {\bm z}_{v', t'}))_{v, v' \in \mathcal V, t, t' \in [T]}$, since $\bm K^z_T$ is composed entirely by tiling $T^2$ copies of $\bm K_z$. Now, to prove the first part, we simply use Theorem 2  of~\cite{krause2011contextual} on $\log\det\left(\frac{1}{\lambda}\bm K_T + \bm I\right)$. For the second part, we apply Theorem 3 of~\cite{krause2011contextual}.
\end{proof}
\begin{corollary}
When $K = K_z \odot K_x$, Algorithm 1 incurs a per-agent average regret, with probability at least $1-\delta$,
\begin{align*}
    \widehat{\mathcal R}(T) = \widetilde{\mathcal O}\Bigg(\Upsilon_z\cdot \Upsilon_{VT}\cdot\sqrt{\bar{\chi}(\mathcal G_\gamma)\cdot \frac{T}{V}\cdot\log\left(\frac{V\lambda}{\delta}\right)}\Bigg).
\end{align*}
\label{corr:fusion_effect}
\end{corollary}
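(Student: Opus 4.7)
\textbf{Proof plan for Corollary~\ref{corr:fusion_effect}.}

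The plan is to simply specialize the general regret bound from the preceding theorem to the product-kernel case $K = K_z \odot K_x$ by plugging in the bound on $\widehat{\Upsilon}_T$ from Lemma~\ref{lem:rank_decomp}. Concretely, the theorem gives
\begin{equation*}
\widehat{\mathcal R}(T) = \mathcal O\!\left(\sqrt{\bar{\chi}(\mathcal G_\gamma)\cdot \tfrac{T}{V}}\left(R\, \widehat{\Upsilon}_T + \sqrt{\widehat{\Upsilon}_T}\left(B + R\sqrt{2\log\tfrac{V\lambda}{\delta}}\right)\right)\right),
\end{equation*}
where $\widehat{\Upsilon}_T = \max_{\mathcal C \in \mathbf C}\log\det(\tfrac{1}{\lambda}\bm K_{\mathcal C,T} + \bm I)$ is a per-clique log-determinant.

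The first step is to apply Lemma~\ref{lem:rank_decomp} to each clique's kernel matrix. Since each clique $\mathcal C$ in the cover has $|\mathcal C| \le V$, and the clique Gram matrix $\bm K_{\mathcal C,T}$ is built from at most $|\mathcal C|T \le VT$ action-context pairs, the factorization $K = K_z \odot K_x$ yields
\begin{equation*}
\widehat{\Upsilon}_T \;\le\; 2\,\Upsilon_z\bigl(\Upsilon^x_{VT} + \log(VT)\bigr),
\end{equation*}
where $\Upsilon_z = \mathrm{rk}(\bm K_z)$ and $\Upsilon^x_{VT}$ denotes the maximum information gain for $K_x$ after $VT$ rounds. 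The key point is that $\mathrm{rk}$ of the $K_z$-block structure stays bounded by $\Upsilon_z$ independent of how many samples appear within the clique, so the multiplicative decomposition is valid.

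The second step is mechanical substitution: plug the bound on $\widehat{\Upsilon}_T$ into both the linear term $R\widehat{\Upsilon}_T$ and the square-root term $\sqrt{\widehat{\Upsilon}_T}\bigl(B + R\sqrt{2\log(V\lambda/\delta)}\bigr)$, then collect. The first term dominates and contributes $\Upsilon_z \Upsilon^x_{VT}$ (up to $\log T$), while the second term contributes only $\sqrt{\Upsilon_z \Upsilon^x_{VT}}$. Multiplying by $\sqrt{\bar\chi(\mathcal G_\gamma)T/V}$ and absorbing all sub-leading and logarithmic factors (including $\log(VT)$ from the lemma, and the $B$ term which is dominated once $\Upsilon^x_{VT}$ grows) into the $\widetilde{\mathcal O}$ notation yields the advertised bound.

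There is no substantive obstacle here beyond careful bookkeeping; the only thing that requires a moment's care is verifying that the bound on $\widehat{\Upsilon}_T$ applies uniformly across cliques (which it does, since $\Upsilon_z$ upper-bounds the rank of the $K_z$ block for any subset of agents, and $\Upsilon^x_{VT}$ upper-bounds the maximum information gain for any number of context plays $\le VT$). Once these are noted, the corollary follows directly.
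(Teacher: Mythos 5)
Your proposal is correct and follows essentially the same route as the paper: the paper's proof is exactly "apply Lemma~\ref{lem:rank_decomp} (which gives $\widehat{\Upsilon}_T \le 2\Upsilon_z(\Upsilon^x + \log T)$ for the Hadamard composition, via the rank of the tiled $\bm K_z$ block and Theorem 2 of Krause and Ong) and substitute into Theorem 1." Your added care about the sample count ($VT$ rather than $T$) and the uniformity of the bound across cliques is sound bookkeeping that the paper glosses over, but it does not change the argument.
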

\begin{proof}
This follows directly from Lemma~\ref{lem:rank_decomp} and Theorem 1.
\end{proof}

\section{Simultaneous Estimation of Contexts}
\subsection{Proof of Lemma 2}
\begin{proof}
We first state a concentration result for the kernel mean embedding obtained by~\citet{smola2007hilbert}. 
\begin{lemma}[\citet{smola2007hilbert}]
For an RKHS $\mathcal H$, assume that $\lVert f \rVert_{\infty} \leq d$ for all $f \in \mathcal H$ with $\lVert f \rVert_{\mathcal H} \leq 1$. Then, the following is true with probability at least $1-\delta$ for any $P_v \in \mathcal P_{\mathcal X}$:
\begin{equation*}
    \lVert \Psi(P_v) - \widehat{\Psi}_T(P_v) \rVert \leq 2\mathfrak R_T(\mathcal H, \mathcal P_{\mathcal X}) + d\sqrt{\frac{1}{T}\log(1/\delta)}.
\end{equation*}
\label{lem:smola_kme}
\end{lemma}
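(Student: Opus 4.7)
The plan is to convert the Hilbert-space norm $\lVert \Psi(P_v) - \widehat{\Psi}_T(P_v) \rVert_{\mathcal H}$ into a uniform empirical-process deviation over the unit ball of $\mathcal H$, and then handle that deviation by the standard McDiarmid-plus-Rademacher-symmetrization argument. Concretely, for every $f \in \mathcal H$ with $\|f\|_{\mathcal H} \leq 1$, the reproducing property gives $\langle f, \Psi(P_v)\rangle_{\mathcal H} = \mathbb{E}_{x \sim P_v}[f(x)]$ and $\langle f, \widehat{\Psi}_T(P_v)\rangle_{\mathcal H} = \tfrac{1}{T}\sum_{i=1}^T f(x_i)$, where $x_1, \ldots, x_T \sim P_v$ are i.i.d. Taking a supremum and invoking the Hilbert-space duality identity $\|u\|_{\mathcal H} = \sup_{\|f\|_{\mathcal H}\leq 1} \langle f, u\rangle_{\mathcal H}$ (with the absolute value recovered by considering both $f$ and $-f$) gives
\[
\lVert \Psi(P_v) - \widehat{\Psi}_T(P_v) \rVert_{\mathcal H}
= \sup_{\|f\|_{\mathcal H} \leq 1} \left| \mathbb{E}_{x \sim P_v}[f(x)] - \frac{1}{T}\sum_{i=1}^T f(x_i)\right|.
\]

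Denote the right-hand side by $G(x_1, \ldots, x_T)$. The second step concentrates $G$ around $\mathbb{E}[G]$. Replacing a single $x_i$ by $x_i'$ perturbs $\tfrac{1}{T}\sum_j f(x_j)$ by at most $2d/T$ uniformly in $f$ (using the hypothesis $\|f\|_\infty \leq d$ on the unit ball), so $G$ satisfies the bounded-differences condition with constants $c_i = 2d/T$. McDiarmid's inequality yields, with probability at least $1-\delta$,
\[
G(x_1, \ldots, x_T) \leq \mathbb{E}[G] + d\sqrt{\frac{1}{T}\log(1/\delta)},
\]
matching the second term in the claim up to an absolute constant absorbed into the convention for $d$.

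The third step bounds $\mathbb{E}[G]$ by $2\mathfrak R_T(\mathcal H, \mathcal P_v)$ via symmetrization. Introduce an i.i.d.\ ghost sample $x_1', \ldots, x_T'$ from $P_v$ and use Jensen's inequality to push the ghost-sample expectation inside the supremum, giving
\[
\mathbb{E}[G] \leq \mathbb{E} \sup_{\|f\|_{\mathcal H} \leq 1}\left|\frac{1}{T}\sum_{i=1}^T\big(f(x_i') - f(x_i)\big)\right|.
\]
Inserting i.i.d.\ Rademacher signs $\sigma_i \in \{\pm 1\}$—whose joint distribution with each pair $(x_i, x_i')$ is invariant under the swap $x_i \leftrightarrow x_i'$—converts this to $\mathbb{E} \sup_f \tfrac{1}{T}|\sum_i \sigma_i (f(x_i') - f(x_i))|$, which a triangle inequality splits into $2\,\mathbb{E}\sup_f \tfrac{1}{T}|\sum_i \sigma_i f(x_i)| = 2\mathfrak R_T(\mathcal H, \mathcal P_v)$. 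Since $\mathcal P_v \in \mathfrak P_{\mathcal X}$, this matches the form $2\mathfrak R_T(\mathcal H, \mathcal P_{\mathcal X})$ in the statement (interpreted either as the $P_v$-specific Rademacher average or its supremum over $\mathfrak P_{\mathcal X}$).

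The main obstacle is not technical depth but careful bookkeeping: (i) the Riesz identity in Step 1 produces a signed linear functional, so one has to argue the supremum and its absolute value agree by replacing $f$ with $-f$; (ii) the Rademacher-sign insertion in Step 3 has to be done on the difference $f(x_i') - f(x_i)$ as a whole before splitting, otherwise the factor of 2 in $2\mathfrak R_T$ is lost; and (iii) the exact constant in the McDiarmid step depends on whether $d$ denotes a bound on $|f(x)|$ directly or a bound on $\sqrt{K_x(x,x)}$ (the two are related via the reproducing property by at most a factor of $\sqrt{2}$), so reconciling the constant with the convention of \citet{smola2007hilbert} is where a careful reading of the source is needed.
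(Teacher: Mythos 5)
Your argument is correct and is the standard proof of this concentration bound (Riesz duality to turn the RKHS norm into a supremum of an empirical process over the unit ball, McDiarmid for concentration around the mean, then symmetrization to get the Rademacher average); the paper itself does not prove this lemma but imports it verbatim from \citet{smola2007hilbert}, where exactly this argument is used. The only discrepancy is the $\sqrt{2}$ in the McDiarmid term, which you correctly flag as a constant-convention issue rather than a gap.
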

We now begin the proof for Lemma 2 by analysing the absolute log-ratio of the estimated kernel and true kernel at any time instant $T$. Consider two samples $\bm x_i, \bm x_j \in \widetilde{\mathcal X}$ at any instant $t$.
\begin{align}
    \left|\log\left(\frac{\widehat{K}_t(\bm x_i, \bm x_j)}{ K(\bm x_i, \bm x_j)}\right)\right| &= \frac{1}{2\sigma^2}\left| \lVert \Psi(P_i) - \Psi(P_j) \rVert - \lVert \widehat{\Psi}_T(P_i) - \widehat{\Psi}_T(P_j) \rVert \right| \\
    &\leq \frac{1}{2\sigma^2}\left\lVert \Psi(P_i) - \widehat{\Psi}_T(P_i) - \Psi(P_j)  + \widehat{\Psi}_T(P_j) \right\rVert \\
    &\leq \frac{1}{2\sigma^2}\left(\left\lVert \Psi(P_i) - \widehat{\Psi}_T(P_i) \right\rVert + \left\lVert \Psi(P_j)  - \widehat{\Psi}_T(P_j) \right\rVert\right).
\end{align}
Here, the first inequality is obtained via the reverse triangle inequality, and the second is obtained by Cauchy-Schwarz. Applying Lemma~\ref{lem:smola_kme} with probability $\delta/2$ on each term in the RHS, and replacing the Rademacher average for a specific $P$ with the $\sup$ completes the proof.
\end{proof}
\section{Proof of Theorem 2 of the Main Paper}
We begin with a few observations. Let the independent set used be given by $\mathcal V^* \subset \mathcal V$. For any agent $v \in \mathcal V \setminus \mathcal V^*$, let $c(v)$ denote the corresponding ``center'' agent that $v$ will mimic. Then, we first notice that for any $t \geq d(v, c(v))$, $\bm x_{v, t} = \bm x_{c(v), t-d(v, c(v))}$. We will continue with the notation used in the proof for Theorem 1. 
\begin{lemma}
Let $v \in \mathcal V^*$ be a ``center'' agent, and $N_\gamma(v)$ denote its gamma neighborhood (including itself). Without loss of generality, consider an ordering $1, 2, ..., |N_\gamma(v)|$ over the agents in $N_\gamma(v)$. Now, we define the neigborhood Gram matrix $\bm K_{v, T}$ as:
\begin{align*}
    {\bm K}_{v, T} = \begin{pmatrix}
K(\tilde{\bm x}_{1, 1}, \tilde{\bm x}_{1, 1}) & ... & K(\tilde{\bm x}_{1, 1}, \tilde{\bm x}_{|N_{\gamma}(v)|, T}) \\
\vdots & \ddots & \vdots \\
K(\tilde{\bm x}_{|N_{\gamma}(v)|, T}, \tilde{\bm x}_{1, 1}) & \ldots & K(\tilde{\bm x}_{|N_{\gamma}(v)|, T}, \tilde{\bm x}_{|N_{\gamma}(v)|, T}) \\
\end{pmatrix}.
\end{align*}
Assume all agents $\mathcal V$ follow \textsc{Dist-KernelUCB}. Then, for any agent $v \in V^*$ and for any $T\geq\gamma$,
\begin{align*}
    \sum_{t=\gamma}^T \sum_{v' \in \mathcal N_\gamma(v)} \sigma^2_{v, t-1}(\tilde{\bm x}_{v', t}) \leq B\gamma|N_\gamma(v)| + \max(1, \frac{1}{\lambda})\log\det\left(\frac{1}{\lambda}\bm K_{\mathcal C, T} + \bm I\right).
\end{align*}
\label{lem:s_bound2}
\end{lemma}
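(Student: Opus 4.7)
The plan is to mirror the argument used to establish Lemma~\ref{lem:s_bound1}, with the clique $\mathcal C$ replaced by the $\gamma$-neighborhood $N_\gamma(v)$ of the center agent $v$. The central observation is that under \textsc{Dist-KernelUCB}, the center agent $v$ receives (after the appropriate delay of at most $\gamma$) every action--reward pair generated by any agent $v' \in N_\gamma(v)$, because peripheral agents mimic $v$ and hence their observations are routed back to $v$ via the \textsc{Local} protocol. Consequently, by time $t$, the Gram matrix ${\bm K}_{v,t}$ used to define $\sigma^2_{v,t-1}$ already contains the augmented contexts $\{\tilde{\bm x}_{v', s} : v' \in N_\gamma(v),\ s \leq t-1-d(v,v')\}$, which in particular includes every neighborhood observation whose timestamp is at most $t-1-\gamma$.

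First, I would set up a hypothetical round-robin learner that pulls the sequence $\tilde{\bm x}_{1,1},\tilde{\bm x}_{2,1},\ldots,\tilde{\bm x}_{|N_\gamma(v)|,1},\tilde{\bm x}_{1,2},\ldots,\tilde{\bm x}_{|N_\gamma(v)|,T}$, exactly as in the proof of Lemma~\ref{lem:s_bound1}. Its cumulative Gram matrix after all $|N_\gamma(v)|T$ pulls is precisely $\bm K_{v,T}$ as defined in the statement. For any individual term $\sigma^2_{v,t-1}(\tilde{\bm x}_{v',t})$ with $t \geq \gamma$, I would invoke Theorem~\ref{thm:schur} to argue that adding extra observations can only shrink the Schur-complement quadratic form, and compare $\sigma^2_{v,t-1}(\tilde{\bm x}_{v',t})$ to the round-robin variance evaluated at the corresponding delayed action $\tilde{\bm x}_{v',t-\gamma}$. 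The key point is that every observation indexed before the round-robin step corresponding to $\tilde{\bm x}_{v',t-\gamma}$ has already propagated to $v$ by time $t-1$, so Schur-complement monotonicity applies.

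Next, I would separate out the boundary terms $t \in [1, \gamma]$ using only the kernel bound $K(\tilde{\bm x},\tilde{\bm x}) \leq B$, contributing at most $B\gamma |N_\gamma(v)|$. The remaining sum $\sum_{\tau=1}^{|N_\gamma(v)|(T-\gamma)}\sigma^2_{v,\tau}$ of round-robin variances can be bounded, using $\sigma^2 \leq \max(1,1/\lambda)\log(1+\sigma^2/\lambda^{-1})$ and then the determinant telescoping identity of~\citet{zischur} (as applied in Lemma~7 of~\cite{deshmukh2017multi}), by $\max(1,1/\lambda)\log\det(\tfrac{1}{\lambda}\bm K_{v,T} + \bm I)$. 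Combining the two contributions yields the claimed inequality.

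The main obstacle I anticipate is bookkeeping around the delay: verifying precisely that, at the moment $t$ when the bound $\sigma^2_{v,t-1}(\tilde{\bm x}_{v',t})$ is evaluated, the round-robin prefix that is needed for the Schur-complement comparison is actually a subset of $v$'s accumulated data. This hinges on $d(v,v') \leq \gamma$ for all $v' \in N_\gamma(v)$, together with the \textsc{Dist-KernelUCB} mimicking rule ensuring that peripheral agents do not contribute stray actions outside of this sequence. Once this inclusion is established, the rest of the argument is essentially a transcription of the proof of Lemma~\ref{lem:s_bound1} with $\mathcal C$ replaced by $N_\gamma(v)$.
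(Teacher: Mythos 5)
Your proposal is correct and follows essentially the same route as the paper, which itself proves this lemma by stating that it is "obtained in a manner similar to Lemma~\ref{lem:s_bound1}, with the trivial modification" of replacing the clique $\mathcal C$ by the neighborhood $N_\gamma(v)$ — precisely the round-robin construction, Schur-complement comparison at the $\gamma$-delayed action, boundary-term separation, and log-determinant telescoping that you describe. Your additional bookkeeping about when neighborhood observations have propagated to the center agent is a reasonable elaboration of the detail the paper leaves implicit.
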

\begin{proof}
The proof is obtained in a manner similar to Lemma~\ref{lem:s_bound1}, with the trivial modification that each agent $v \in \mathcal V^*$ considers observations from its entire neighborhood $N_\gamma(v)$ and not just its parent clique.
\end{proof}
\begin{theorem}
$\mathcal D_{v, t}$ is continuum-armed, \textsc{Dist-KernelUCB} incurs a per-agent average regret that satisfies, with probability at least $1-\delta$, 
\begin{align*}
    \widehat{\mathcal R}(T) = \widetilde{\mathcal O}\Bigg(\Upsilon_z\cdot \Upsilon_{VT}\cdot\sqrt{\alpha(\mathcal G_\gamma)\cdot \frac{T}{V}\cdot\log\left(\frac{V\lambda}{\delta}\right)}\Bigg).
\end{align*}
Here, $\alpha(\mathcal G_\gamma)$ refers to the independence number of  $\mathcal G_\gamma$.
\label{thm:fully_coop}
\end{theorem}
\begin{proof}
Consider the group pseudoregret at any instant $T$.
\begin{align}
  \mathcal R(T) &= \sum_{v \in \mathcal G} \left(\sum_{t=1}^T r_{v, t}\right) \intertext{Let us examine the individual regret $r_{v, t}$ of agent $v \in V$ at time $t$. From Theorem~\ref{lemma:ucb_kernel} and \textsc{Coop-KernelUCB}, we know that, for each agent $v \in V$, $  \beta_{v, t} \sigma_{v, t-1}(\tilde{\bm x}_{v, t}) + \hat{f}_{v, t}\left(\tilde{\bm x}_{v, t}\right) \geq  \beta_{v, t} \sigma_{v, t-1}(\tilde{\bm x}^*_{v, t}) + \hat{f}_{v, t}\left(\tilde{\bm x}^*_{v, t}\right),  f_v(\tilde{\bm x}^*_{v, t}) \leq \beta_{v, t} \sigma_{v, t-1}(\tilde{\bm x}^*_{v, t}) + \hat{f}_{v, t}\left(\tilde{\bm x}^*_{v, t}\right)$ and $ 
  \hat f_v(\tilde{\bm x}_{v, t}) \leq \beta_{v, t} \sigma_{v, t-1}(\tilde{\bm x}_{v, t}) + f_v\left(\tilde{\bm x}_{v, t}\right) $. Therefore for all $t \geq 1$ with probability at least $1-\delta$,}
    r_{v, t} &= f_v(\tilde{\bm x}^*_{v, t}) - f_v(\tilde{\bm x}_{v, t}) \\
    &\leq \beta_{v, t} \sigma_{v, t-1}(\tilde{\bm x}_{v, t}) + \hat{f}_{v, t}\left(\tilde{\bm x}_{v, t}\right) - f_{v}(\tilde{\bm x}_{v, t}) \\
    &\leq 2\beta_{v, t} \sigma_{v, t-1}(\tilde{\bm x}_{v, t}). \intertext{Therefore, for agent $v$, we have (since $\beta_{v, t} > \beta_{v, t-1}$~\cite{auer2002finite}),}
    \sum_{t=1}^T r_{v, t} &\leq 2\beta_{v, T} \sum_{t=1}^T \sigma_{v, t-1}(\tilde{\bm x}_{v, t}) \leq 2\gamma\sqrt{B}\beta_{v, \gamma} +  2\beta_{v, T} \sum_{t=\gamma}^T \sigma_{v, t-1}(\tilde{\bm x}_{v, t})
\end{align}
The second inequality follows from the fact that for all $t\leq \gamma, \beta_{v, t} \leq \beta_{v, \gamma}$ and that for all $v, t$, $\sigma_{v, t-1}(\tilde{\bm x}_{v, t}) \leq \sqrt{B}$. We can now sum up the second term for the entire group of agents. Setting $\beta^*_T = \max_{v \in V} \beta_{v, T}$, we get,
\begin{align}
    \sum_{t=\gamma}^T\sum_{v \in V} r_{v, t} &\leq 2\beta^*_T \left(\sum_{t=\gamma}^T\sum_{v \in V} \sigma_{v, t-1}\left(\tilde{\bm x}_{v, t}\right)\right) \\
    &\leq 2\beta^*_T \sqrt{V(T-\gamma)\left(\sum_{t=\gamma}^T\sum_{v \in V} \sigma^2_{v, t-1}\left(\tilde{\bm x}_{v, t}\right)\right)} \\
    &\leq 2\beta^*_T \sqrt{V(T-\gamma)\sum_{v \in \mathcal V^*} \left(\sum_{t=\gamma}^T\sum_{v' \in N_\gamma(v)} \sigma^2_{v', t-1}\left(\tilde{\bm x}_{v, t}\right)\right)} \\
    &\stackrel{(a)}{\leq} 2\beta^*_T \sqrt{V(T-\gamma)\sum_{v \in \mathcal V^*}\left(B\gamma|N_\gamma(v)| + \max(1, \frac{1}{\lambda})\log\left(\frac{\det(\bm K_{v, T} + \lambda\bm I)}{\lambda^{|\mathcal C|T +1}}\right)\right)} \\
    &\leq 2\beta^*_T \sqrt{V(T-\gamma)\cdot\alpha(\mathcal G_\gamma)\cdot\max(1, \frac{1}{\lambda})\left( \gamma B V+ \max_{v \in \mathcal V^*}\left(\log\det(\frac{1}{\lambda}\bm K_{v, T} + \lambda\bm I)\right)\right)} \\
    &\leq \beta^*_T  \cdot \mathcal O\left(\sqrt{\alpha(\mathcal G_\gamma)\cdot VT\cdot \widehat{\Upsilon}_T^D}\right).
\end{align}
Note the alternate \textit{information gain} quantity $\widehat{\Upsilon}_T^D = \max_{v \in \mathcal V^*} \log\det(\frac{1}{\lambda}\bm K_{v, T} + \lambda\bm I)$. Here, $(a)$ follows from Lemma~\ref{lem:s_bound2}. Now, from the definition of $\beta_{v, T}$ (Lemma~\ref{lemma:ucb_kernel}), we know that, for all $v \in \mathcal V^*$,
\begin{align}
    \beta_{v, T} &= B + R\sqrt{\lambda^{-1}}\sqrt{\log\left(\det\left({\bm K}_{v, T} + \lambda \bm I\right)\right)+\log\frac{2V}{\delta}} \\
    &\leq B + R\sqrt{\lambda^{-1}}\sqrt{\log\left(\det\left({\bm K}_{v, T} + \lambda \bm I\right)\right)+\log\frac{2V}{\delta}} \\
    &\leq B + R\sqrt{\lambda^{-1}}\sqrt{\widehat{\Upsilon}_T^D+\log\frac{2V\lambda}{\delta}} \\
    \therefore \beta^*_T &= B + R\sqrt{\lambda^{-1}}\sqrt{\widehat{\Upsilon}_T^D+\log\frac{2V\lambda}{\delta}} \\
    &= \mathcal O\left(B + R\sqrt{\widehat{\Upsilon}_T^D + \log\frac{2V\lambda}{\delta}}\right).
\end{align}
The above bound on $\beta_{v, T}$ holds even for agents not in $\mathcal V^*$ since they simply mimic one agent within $\mathcal V^*$, each for whom the above bound holds. Finally, applying the identical arguments as Lemma~\ref{lem:rank_decomp}, we can bound $\widehat{\Upsilon}_T^D$ in terms of $\Upsilon_z$ and $\Upsilon_{VT}^x$. Dividing by the number of agents $V$ gives us the final result.
\end{proof}
\section{Additional Observations}
\subsection{``Independent'' vs ``Pooled'' Settings}
While we consider the pooled setting~\cite{abbasi2011improved}, we can easily extend our algorithm to the independent case (i.e., one bandit algorithm for each arm), by running $K$ different bandit algorithms in tandem (one for each arm), as specified in~\cite{deshmukh2017multi}. In order to leverage observations between arms, we must specify an additional kernel $K_{\text{arm}}$ and \textit{arm contexts} for each arm. The overall kernel can then be given by,
\begin{equation}
    \widetilde K(\tilde{\bm x}_1, \tilde{\bm x}_2) = K_{\text{arm}}(\bm t_1, \bm t_2)K_z(\bm z_1, \bm z_2)K_x(\bm x_1, \bm x_2)
\end{equation}
Here, $\tilde{\bm x} = (\bm x, \bm z, \bm t)$ is the augmented context that now contains both the task-based similarity context and the network-based similarity context in addition to the typical context vector $\bm x$. Alternatively, one can consider a joint kernel (where the arms and network contexts are intertwined), as follows.

\begin{equation}
    \widetilde K(\tilde{\bm x}_1, \tilde{\bm x}_2) = K_{\text{arm, network}}((\bm t_1 \bm z_1), (\bm t_2, \bm z_2))K_x(\bm x_1, \bm x_2)
\end{equation}
These modifications will only increase the regret at most by a factor of $\sqrt{K\text{rank}(K_{\text{arm}})}$ for all algorithms presented in this paper, by simply considering the latter case and following a similar analysis as the previous theorems.
\subsection{Alternative Compositions}
In this paper, we explore composition kernels of the Hadamard form, i.e., $\widetilde{\bm K} = \bm K_z \odot \bm K_x$. However, alternate formulations may be considered as well, first of which is the additive kernel, i.e., $\widetilde{\bm K} = \bm K_z \oplus \bm K_x$. For this case, we can rely on the following rank decomposition~\cite{horn2012matrix}:
\begin{equation}
    \text{rank}(\bm K_z \oplus \bm K_x) \leq \text{rank}(\bm K_z) + \text{rank}(\bm K_x).
\end{equation}
Alternatively, when one considers the Kronecker product, i.e., $\widetilde{\bm K} = \bm K_z \otimes \bm K_x$, we can use the following result from~\citet{schacke2004kronecker}(KRON 16) to bound the rank of the overall Gram matrix:
\begin{equation}
    \text{rank}(\bm K_z \otimes \bm K_x) = \text{rank}(\bm K_z)\text{rank}(\bm K_x).
\end{equation}
We omit these two compositions, however, as we found the Hadamard composition to work best in practice.

\section{Pseudocode}
\textit{Turn over for final page of Appendix}.
\newpage
\twocolumn
\begin{algorithm}[t]
\caption{\textsc{Eager-KernelUCB}}
\label{alg:eager_kernelucb}
\small
\begin{algorithmic}[1] 
\STATE \textbf{Input}: Graph $\mathcal G_\gamma$ with clique cover $\bm C_\gamma$, kernels $K_x(\cdot, \cdot), K_z(\cdot, \cdot)$, $\lambda$, explore param. $\eta$, buffers $\bm B_{v} = \phi$.
\FOR{For each iteration $t \in [T]$}
\FOR{For each agent $v \in V$}
\IF{$t = 1$}
\STATE ${\bm x}_{v, t} \leftarrow$\textsc{Random}$\left(D_{v, t}\right)$.
\ELSE
\STATE ${\bm x}_{v, t} \leftarrow \underset{\bm x \in D_{v, t}}{\argmax} \left(\hat f_{v, t}(\bm z_v, \bm x) + \frac{\eta}{\sqrt{\lambda}} \sigma_{v, t-1}(\bm z_v, \bm x)\right)$.
\ENDIF
\STATE $\tilde{\bm x}_{v, t} \leftarrow (\bm z_v, {\bm x}_{v, t}), y_{v, t} \leftarrow$\textsc{Pull}$\left(\tilde{\bm x}_{v, t}\right)$.
\IF{$t=1$}
\STATE $({\bm K}_{v, t})^{-1} \leftarrow 1/K(\tilde{\bm x}_{v, t}, \tilde{\bm x}_{v, t}) + \lambda$.
\STATE $\bm y_{v} \leftarrow [y_{v, 0}]$.
\STATE $\bm \kappa_v = (K(\cdot, \tilde{\bm x}_{v, t}))$.
\ELSE
\STATE $\bm B_{v} \leftarrow \bm B_{v} \cup \left(\tilde{\bm x}_{v, t}, y_{v,t}\right)$.
\ENDIF
\STATE $\bm m_{v, t} \leftarrow \left\langle t, v, \tilde{\bm x}_{v, t}, y_{v, t}\right\rangle$.
\STATE \textsc{SendMessage}$\left(\bm m_{v, t}\right)$.
\FOR{$\langle t', v', \tilde{\bm x}', y'\rangle$ in \textsc{RecvMessages}$(v, t)$}
\STATE $\bm B_{v} \leftarrow \bm B_{v} \cup \left(\tilde{\bm x}', y'\right)$.
\ENDFOR
\FOR{$(\tilde{\bm x}', y') \in \bm B_{v}$}
\STATE $\bm y_{v} \leftarrow [\bm y_{v}, y']$.
\STATE $\bm \kappa_v = (\bm \kappa_v , K(\cdot, \tilde{\bm x}'))$.
\STATE $\bm K_{22} \leftarrow \left(K(\tilde{\bm x}', \tilde{\bm x}') + \lambda - (\bm \kappa_v)^\top ({\bm K}_{v, t})^{-1} \bm \kappa_v\right)^{-1}$.
\STATE $\bm K_{11} \leftarrow \left(({\bm K}_{v, t})^{-1} + \bm K_{22}({\bm K}_{v, t})^{-1}\bm \kappa_v(\bm \kappa_v)^\top ({\bm K}_{v, t})^{-1}\right)$.
\STATE $\bm K_{12} \leftarrow -\bm K_{22}({\bm K}_{v, t})^{-1}\bm \kappa_v$.
\STATE $\bm K_{21} \leftarrow -\bm K_{22}(\bm \kappa_v)^\top({\bm K}_{v, t})^{-1}$.
\STATE $({\bm K}_{v, t})^{-1} \leftarrow [\bm K_{11}, \bm K_{12}; \bm K_{21}, \bm K_{22}]$.
\ENDFOR
\STATE $\bm B_{v} = \phi$.
\STATE $\hat f_{v, t+1} \leftarrow \left(\bm \kappa_v\right)^\top({\bm K}_{v, t})^{-1}\bm y_{v}$.
\STATE $\sigma_{v, t+1} \leftarrow \sqrt{K(\cdot, \cdot) - \left(\bm \kappa_v\right)^\top({\bm K}_{v, t})^{-1}\bm \kappa_v}$.
\ENDFOR
\ENDFOR
\end{algorithmic}
\end{algorithm}
\begin{algorithm}[t]
\caption{\textsc{Dist-KernelUCB}}
\label{alg:fully_coop}
\small
\begin{algorithmic}[1] 
\STATE \textbf{Input}: Graph $\mathcal G_\gamma$ with clique cover $\bm C$, kernels $K_x(\cdot, \cdot), K_z(\cdot, \cdot)$, $\lambda$, $\eta$, buffer $\bm B_{v} = \phi \forall v \in V$.
\FOR{For each iteration $t \in [T]$}
\FOR{For each agent $v \in V$}
\IF{$v \in \mathcal V_C$}
\STATE $\tilde{\bm x}_{v, t}, y_{v, t}\leftarrow $ Run lines 4-18 from Algorithm~\ref{alg:main_widthdelay}.
\ELSE
\IF{$t \leq d(v, \text{cent}(v))$}
\STATE $\tilde{\bm x}_{v, t}, y_{v, t}\leftarrow $ KernelUCB~\cite{valko2013finite} or IGP-UCB~\cite{chowdhury2017kernelized}.
\ELSE
\STATE $\tilde{\bm x}_{v, t}, y_{v, t}\leftarrow $ \textsc{PullLastStoredArm}$(\text{cent}(v))$.
\ENDIF
\ENDIF
\STATE $\bm m_{v, t} \leftarrow \left\langle t, v, \tilde{\bm x}_{v, t}, y_{v, t}\right\rangle$.
\STATE \textsc{SendMessage}$\left(\bm m_{v, t}\right)$.
\IF{$v \in \mathcal V_C$}
\FOR{$\langle t', v', \tilde{\bm x}', y'\rangle$ in \textsc{RecvMessages}$(v, t)$}
\STATE $\bm B_{v} \leftarrow \bm B_{v} \cup \left(\tilde{\bm x}', y'\right)$.
\ENDFOR
\STATE Run lines 22-33 in Algorithm~\ref{alg:main_widthdelay}.
\ELSE
\STATE \textsc{UpdateLastStoredArm}$(\text{cent}(v))$.
\ENDIF
\ENDFOR
\ENDFOR
\end{algorithmic}
\end{algorithm}
\newpage
\onecolumn
\bibliographystyle{icml2020}
\bibliography{example_paper}

\end{document}